\acrodef{MLM}[MLM]{Masked Language Model}
\acrodef{SoTA}[SoTA]{state-of-the-art}
\newtheorem{thm}{Theorem}
\newtheorem{cor}{Corollary}
\newtheorem{rem}{Remark}
\newtheorem{defn}{Definition}
\newcommand{\Pro}{\mathbb{P}}
\title{A Neighbourhood-Aware Differential Privacy Mechanism\\ for Static Word Embeddings}
    \author{Danushka Bollegala$^{1,2}$\thanks{Contact author: {\tt danushka@liverpool.ac.uk}} \And Shuichi Otake$^3$ 
    \\  ~~~~~~~~~~~~~~~~~~~~~~~~~~~~~~~~~~~~University of Liverpool$^1$, Amazon$^2$, National Institute of Informatics$^3$\\
    ~~~~~~~~~~~~~~~~~~~~~~~~~~~~~~~~~~~International Professional University of Technology in Tokyo$^4$
          \And Tomoya Machide$^4$ \And  Ken-ichi Kawarabayashi$^3$}
\date{}
\begin{document}
\maketitle

\begin{abstract}
We propose a Neighbourhood-Aware Differential Privacy (NADP) mechanism considering the neighbourhood of a word in a pretrained static word embedding space to determine the minimal amount of noise required to guarantee a specified privacy level.
We first construct a nearest neighbour graph over the words using their embeddings, and factorise it into a set of connected components (i.e. neighbourhoods).
We then separately apply different levels of Gaussian noise to the words in each neighbourhood, determined by the set of words in that neighbourhood.
Experiments show that our proposed NADP mechanism consistently outperforms multiple previously proposed DP mechanisms such as Laplacian, Gaussian, and Mahalanobis in multiple downstream tasks, while guaranteeing higher levels of privacy.
\end{abstract}

\section{Introduction}
\label{sec:intro}

Increasingly more NLP models have been trained on private data such as medical conversations, social media posts and personal emails~\cite{Abdalla:2020,Lyu:2020,Song:2019}.
However, we must ensure that sensitive information related to user privacy is not leaked during any stage of the model training process.
To protect user privacy, Differential Privacy (DP) mechanisms  add random noise to the training data~\cite{feyisetan-kasiviswanathan-2021-private,krishna-etal-2021-adept,Feyisetan:2020}.
However, it remains a challenging task to balance the trade-off between user \textbf{privacy} vs. \textbf{performance} in downstream NLP tasks.

We propose \textbf{Neighbourhood-Aware Differential Privacy} (NADP) mechanism, which consists of three steps.
First, given a set of words, we compute a nearest neighbour  graph considering the similarity between the words (represented by the vertices of the nearest neighbour graph) computed using their word embeddings.
Second, we compute the connected components in the nearest neighbour graph to find the \emph{neighbourhoods} of words.
Third, we apply Gaussian noise to all words in each neighbourhood, such that the variance of the noise is determined by the words in that neighbourhood.

\begin{figure}[t]
    \centering
    \includegraphics[height=40mm]{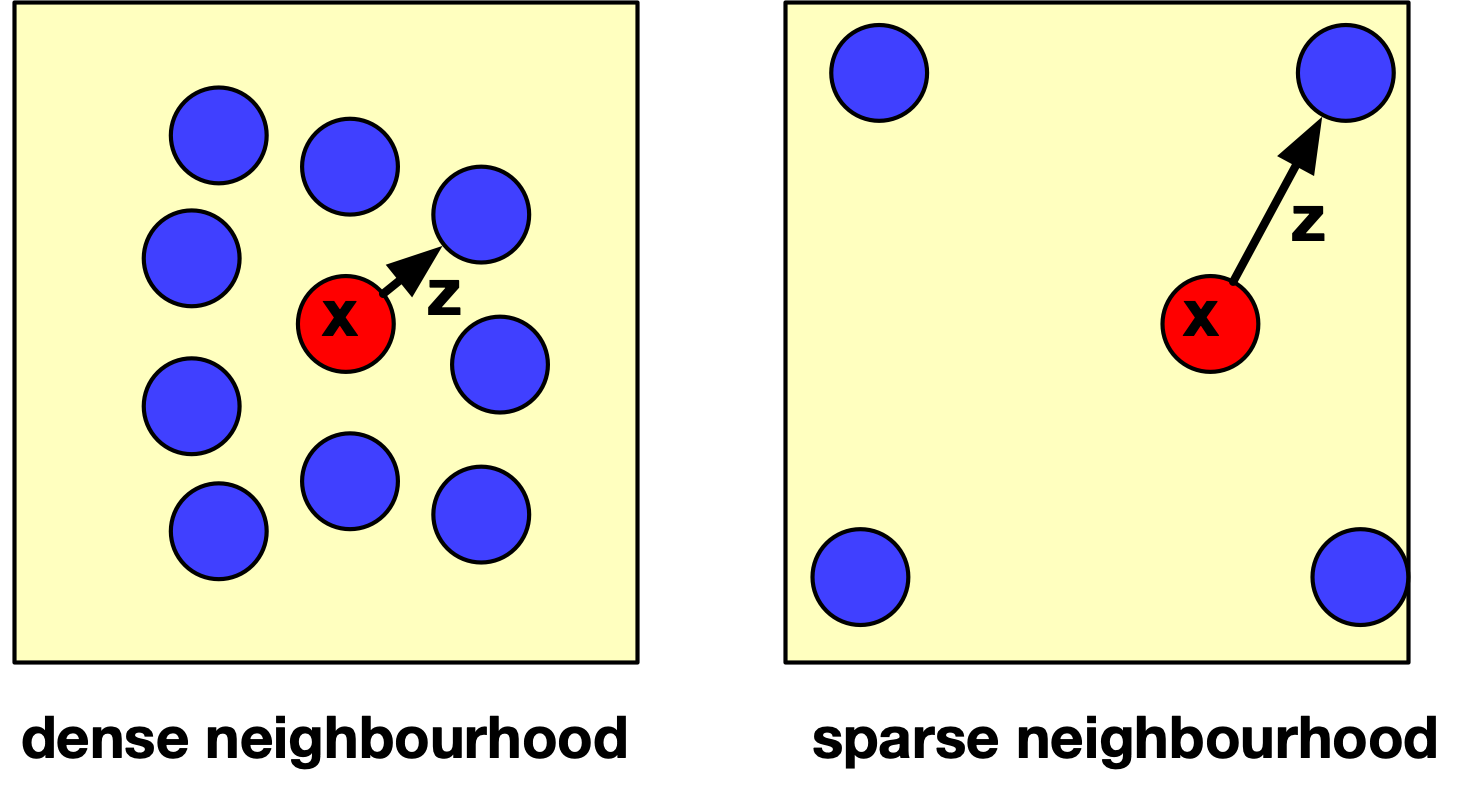}
    \caption{Anonymizing a target word (shown in red) in a dense (left) vs. a sparse (right) neighbourhoods of words (shown in blue). In the sparse neighbourhood, NADP adds a higher level of perturbation noise $\vec{z}$ to the target word embedding $\vec{x}$ in order to protect its privacy by disguising it among its neighbours, while in a dense neighbourhood it adds less noise.}
    \label{fig:idea}
\end{figure}

As illustrated in \autoref{fig:idea}, if all words in a neighbourhood are highly similar to each other (i.e. a \emph{dense} neighbourhood), it would require less perturbation noise to anonymise a word because the addition of small noise can easily \emph{hide} the corresponding word embedding among its neighbours.
On the other hand, if the words in a neighbourhood are not very similar to each other (i.e. a \emph{sparse} neighbourhood) we must add higher levels of perturbation noise to a word embedding because its nearest neighbour would be further away in the embedding space.
Because words in a language is a discrete set (unlike images for example), there does not exist a word corresponding to all points in the embedding space.
Therefore, if we do not add sufficient amount of noise in a sparse neighbourhood, we run the risk of easily discovering the target word via a simple nearest neighbour search.
Instead of adding the same level of noise to all words in a vocabulary as done in prior DP mechanisms, NADP attempts to minimise the total amount of noise by assigning low noise in dense neighbourhoods and high noise in sparse neighbourhoods.
NADP has provable DP guarantees as shown by our theoretical analysis.
Moreover, NADP has the following desirable properties that makes it attractive when used for NLP tasks.

\paragraph{(a)} In NADP, \textbf{noise vectors are sampled from the Gaussian distribution}.
Many static word embedding algorithms \cite{Pennington:EMNLP:2014,Arora:TACL:2016,Milkov:2013} learn embeddings in the $\ell_{2}$ space.
Moreover, the squared $\ell_{2}$ norm of a word embedding is known to positively correlate with the frequency of the word in the training corpus~\cite{Arora:TACL:2016}, while the joint co-occurrence probability of a set of words positively correlates with the squared $\ell_{2}$ norm of the sum of the corresponding word embeddings~\cite{Bollegala:AAAI:2018}.
Therefore, it is natural to consider Gaussian noise, which corresponds to the $\ell_{2}$ embedding space used by many static word embedding learning methods rather than the more widely-used Laplacian noise, which relates to the $\ell_{1}$ norm.

\paragraph{(b)}Unlike previously proposed DP mechanisms for word embeddings~\cite{Feyisetan:2020,feyisetan-kasiviswanathan-2021-private,krishna-etal-2021-adept,xu-etal-2020-differentially}, NADP \textbf{dynamically adjusts the level of noise added to a word embedding considering its neighbourhood}.
This enables us to optimally allocate a fixed noise budget over a vocabulary.

\paragraph{(c)} NADP \textbf{adds noise directly to the word embeddings} and does not perform decoding after the noise addition step~\cite{krishna-etal-2021-adept}.
Decoding is a deterministic process and does not affect DP. 
Many NLP applications such as text classification, clustering etc. require the input text to be represented in some vector space, and we can use the noise-added input text representations straightaway in such applications without requiring to first decode it back to text.
In situations where users train word embeddings on private data on their own and only send/release the embeddings to external machine learning services, we only need to anonymise the word embeddings~\cite{feyisetan-kasiviswanathan-2021-private}.

\noindent\textbf{Results:} \emph{Utility experiments} (\autoref{sec:exp-utility}) conducted over four downstream NLP tasks show that NADP consistently outperforms previously proposed Laplacian, Gaussian and Mahalanobis mechanisms in downstream tasks.
We conduct \emph{privacy experiments} (\autoref{sec:exp-privacy})  to evaluate the level of privacy guaranteed by a DP mechanism for word embeddings. 
Specifically, we estimate the probability of correctly predicting a word from its perturbed word embedding using the overlap between nearest neighbour sets.
To evaluate the level of privacy protected for the entire set of word embeddings, we compute the skewness of the distribution of prediction probabilities.
We find that NADP reports near-zero skewness values across a broad range of privacy levels, $\epsilon$, which indicates significantly stronger privacy guarantees compared to other DP mechanisms.
Source code implementation of our NADP is publicly available.\footnote{\url{https://github.com/shuichiotake/NADP}}

\section{Related Work}
\label{sec:related}

Learning models from data with DP guarantees has been studied under \emph{private learning}~\cite{Kasiviswanathan:2008}.
\newcite{DPSGD}~proposed a DP stochastic gradient descent by adding Gaussian noise to the gradient of the loss function.
\newcite{Rogers:2016uh} combined multiple DP algorithms using adaptive parameters.
However, compared to continuous input spaces such as in computer vision~\cite{Zhu:2020}, DP mechanisms for the discrete inputs such as text remain understudied.

\newcite{wang-etal-2021-certified} proposed \emph{WordDP} to achieve certified robustness against word substitution attacks in text classification.
However, WordDP does \emph{not} seek DP protection for the training data as we consider here, and uses DP randomness for certified robustness during inference time with respect to a testing input.
\newcite{krishna-etal-2021-adept} proposed AdePT, an autoencoder-based approach to generate differentially private text transformations.
However, \newcite{habernal-2021-differential} showed that AdePT is \emph{not} differentially private as claimed and proved weaker privacy guarantees. 
DPText~\cite{Alnasser:2021,Beigi:2019} uses an autoencoder to obtain a text representation and adds Laplacian noise to create private representations.
However, \newcite{habernal-2022-reparametrization} proved that the use of reparametrisation trick for the inverse continuous density function in DPText is inaccurate and that DPText violates the DP guarantees.
Such prior attempts show the difficulty in developing theoretically correct DP mechanisms for NLP.

\newcite{lyu-etal-2020-differentially} proposed DP Neural Representation (DPNR) to preserve the privacy of text representations by first randomly masking words from the input texts and then adding Laplacian noise.
However, unlike NADP DPNR uses a neighbourhood insensitive fixed Laplacian noise distribution. 
\newcite{Feyisetan:2020} proposed a DP mechanism where they first add Laplacian noise to word embeddings and then return the nearest neighbour to the noise-added embedding as the output. 
However, the $\ell_{2}$ norm of the noise vector scales almost linearly with the dimensionality of the embedding space.
To address this issue, in their subsequent work~\cite{feyisetan-kasiviswanathan-2021-private}, they projected the word embeddings to a lower-dimensional space before adding Laplacian noise.

\newcite{xu-etal-2020-differentially} proposed Mahalanobis DP mechanism, which adds elliptical noise considering the covariance structure in the embedding space. 
Unlike the Gaussian or Laplacian mechanisms, Mahalanobis mechanism adds heterogeneous noise along different directions such that words in sparse regions in the embedding space have sufficient likelihood of replacement without sacrificing the overall utility.
They show that Mahalanobis mechanism to be superior to Laplacian mechanism.
Mahalanobis mechanism is a special instance of metric (Lipschitz) DP originated in privacy-preserving geolocation studies~\cite{Andres:2013}, where Euclidean distance was used as the distance metric.
Although metric DP considers the distance between two data points, it does not consider all of the nearest neighbours for each data point when deciding the level of noise that must be applied to a particular data point, unlike our NADP mechanism.
\newcite{li-etal-2018-towards} used adversarial learning to build NLP models such as part-of-speech (PoS) taggers that cannot predict the writer's age or sex, while can accurately predict the PoS tags.
Despite their empirical success, this approach does not have any formal DP guarantees.
In contrast, our focus is provably DP mechanisms with formal guarantees.

All of the prior work described thus far, except DPNR and AdePT, focus on static word embeddings as we do in this paper.
A natural future extension of this work is DP mechanisms for the contextualised embeddings.
However, computational and practical properties of static word embeddings such as,  being lightweight to both compute and store, are attractive for resource (e.g. GPU and RAM) limited mobile devices.
Considering that such personal mobile devices are used by billions of users and contain highly private data, DP mechanisms for static word embeddings remains an important research topic. 
Moreover, \newcite{gupta-jaggi-2021-obtaining} showed that it is possible to  distil static word embeddings from pretrained language models that have comparable performance to contextualised word embeddings.

\section{DP for Word Embeddings}
\label{sec:DP}

Let us denote the $d$-dimensional embedding of a word $x$ in a vocabulary $\cX$ by a vector $\vec{x} \in \R^{d}$.
We can consider a word embedding algorithm as a function $f : \mathbb{X} \rightarrow \R^d$ that maps the words in a discrete vocabulary space $\mathbb{X}$ to a $d$-dimensional continuous space $\R^d$.
We can use a distance metric, $\Gamma$, defined in the embedding space to measure the distance $\Gamma(\vec{x}_i, \vec{x}_j)$ between two words $x_i$ and $x_j$ such as the Euclidean distance.
We can then find the set of top-$m$ nearest neighbours, $\cS_{m}(x)$, from $\cX$ using $\Gamma$ such that for any $y \in \cS_{m}(x)$ and $y' \notin \cS_m(x)$, $\Gamma(x, y) \leq \Gamma(x, y')$ holds.
The Jaccard similarity, $\mathrm{Jaccard}(x,y)$, between two words $x$ and $y$ is defined using their neighbourhoods as in \eqref{eq:jaccard}.
\begin{align}
    \label{eq:jaccard}
    \mathrm{Jaccard}(x,y) = \frac{|\cS_m(x) \cap \cS_m(y)|}{|\cS_m(x) \cup \cS_m(y)|}
\end{align}
We define two words $x, y \in \cX$ to be in a symmetric \emph{neighbouring} relation, $x \simeq y$, if the following two conditions are jointly satisfied:
\begin{description}
\item[(a)] $x \in \cS_{m}(y)$ or $y \in \cS_{m}(x)$, and
\item[(b)] $\mathrm{Jaccard}(x,y) \geq \tau$ for a given threshold $\tau \in [0,1]$.
\end{description}
One could use conjunction instead of disjuction in condition (a) to enforce a mutual nearest neighbour relation.
However, doing so results in a large number of small isolated neighbourhoods because two words might not be \emph{mutual} nearest neighbours unless they are synonyms (or highly related).
Relaxing the condition (a) to a disjunction would form neighbourhoods where one word might be a neighbour of another but not the inverse such as in hypernym-hyponym pairs.
For example, \emph{colour} could be a top nearest neighbour of \emph{crimson}, but \emph{crimson} might not be a top nearest neighbour of \emph{colour}, because there are other prototypical colours such as \emph{red}, \emph{green}, \emph{blue}, etc. than \emph{crimson}.

Let us formally define DP for word embeddings.
Because each word is assigned a vector by the word embedding learning algorithm, we can add noise to the embedding vectors to \emph{disguise} a word among its nearest neighbours in the embedding space.
However, in doing so we will be perturbing the semantics in the embeddings, thus potentially hurting downstream task performance.
Therefore, there exists a \emph{trade-off} between the amount of privacy that can be guaranteed by adding random noise to the embeddings vs. the performance of a downstream NLP task that use those embeddings.
A random mechanism operating on word embeddings is said to be DP if Definition~\ref{def:DP} holds.

\begin{defn}[Differential Privacy]
\label{def:DP}
A random mechanism $M$ that takes in a vector in the embedding space $\mathbb{X}$ and maps into a space $\mathbb{Y}$ (i.e. $M : \mathbb{X} \rightarrow \mathbb{Y}$) is $(\epsilon, \delta)$-DP with $\epsilon \geq 0$ and $\delta \in [0,1]$,
 if for every pair of neighbouring inputs $\vec{x}, \vec{x}' \in \cX$ and  every possible measurable output set  $\cT \in \cY$ the relationship given by \eqref{eq:DP} holds:
\begin{align}
\label{eq:DP}
\mathrm{Pr}[M(\vec{x}) \in \cT] \leq \exp(\epsilon) \mathrm{Pr}[M(\vec{x}') \in \cT] + \delta
\end{align}
\end{defn}
Here, $\epsilon$ represents the level of privacy ensured by $M$ and smaller $\epsilon$ values result in stronger privacy guarantees.
The global $\ell_{2}$ sensitivity of the embedding space is defined as $\Delta = \sup_{x,x' \in \cX,x \simeq x'} \norm{\vec{x} - \vec{x}'}$.
Given a set of word embeddings, $\Delta$ can be estimated empirically by computing the maximum Euclidean distance between a word $x$ and its most distant neighbour $x'$ in $\cS_m(x)$. 
As an extreme case, let us consider the smallest possible neighbourhood size corresponding to $m = 2$.
Estimating $\Delta$ in this case would amount to finding the maximum Euclidean distance between any pair of neighboring words $x, x' \in \cV$.
Moreover, the $\Delta$ estimated for $m = 2$ will be larger than the $\Delta$ estimated for any other $m (>2)$ neighbourhood sizes.
Therefore, $\Delta$ is independent of $m$ and can be estimated via a deterministic process (i.e. measuring all pairwise Euclidean distances) from a given set of word embeddings.

\subsection{Gaussian Mechanism}
\label{sec:gaussian}

Gaussian mechanism uses $\ell_{2}$ norm for estimating the sensitivity due to perturbation and is a more natural fit for word embeddings than, for example, the Laplace mechanism, which is associated with the $\ell_{1}$ norm.
Therefore, we use the Gaussian mechanism as the basis for our proposal.

Let us consider a multivariate zero-mean isotropic Gaussian noise distribution, 
$\cN(\vec{0}, \sigma^2 \mat{I}_{d})$, where $\mat{I}_{d}$ is the unit matrix in the $d$-dimensional real space and 
$\sigma$ is the standard deviation.
For each word, $x \in \cX$, we sample a random vector $\vec{z} \sim \cN(\vec{0}, \sigma \mat{I}_{d})$ and create a noise-added embedding $M_{g}(\vec{x})$ for $x$ as given by \eqref{eq:gauss-noise}.
\begin{align}
\label{eq:gauss-noise}
M_{g}(\vec{x}) = \vec{x} + \vec{z}
\end{align}
This Gaussian mechanism uses the same $\sigma$ for all words in the vocabulary and is $(\epsilon,\delta)$-DP as claimed in in Theorem~\ref{th:gauss}.\footnote{As a direct extension, we could set a different standard deviations for each dimension of the embedding space. 
However, doing so did not result in significant performance gains in our preliminary investigations despite the increased parameters.}

\begin{thm}
\label{th:gauss}
For any $\epsilon, \delta \in (0,1)$, the Gaussian mechanism with $\sigma = \Delta\sqrt{2\log(1.25/\delta)}/\epsilon$ is $(\epsilon,\delta)$-DP.
\end{thm}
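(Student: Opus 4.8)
The plan is to follow the classical privacy-loss analysis of the Gaussian mechanism. Fix an ordered pair of neighbouring inputs $\vec{x}, \vec{x}' \in \cX$ (the reverse inequality in \eqref{eq:DP} then follows by running the same argument with the roles swapped, which is legitimate since $\simeq$ is symmetric), and set $\vec{v} = \vec{x} - \vec{x}'$, so that $\norm{\vec{v}} \le \Delta$ by the definition of the global $\ell_2$ sensitivity. Write $p(\cdot \mid \vec{x})$ for the density of $M_g(\vec{x}) = \vec{x} + \vec{z}$ with $\vec{z} \sim \cN(\vec{0}, \sigma^2 \mat{I}_d)$, and define the privacy loss $L(\vec{y}) = \ln\bigl(p(\vec{y}\mid\vec{x})/p(\vec{y}\mid\vec{x}')\bigr)$. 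The first step is the standard reduction of $(\epsilon,\delta)$-DP to a tail bound: if $\Pr_{\vec{y}\sim M_g(\vec{x})}[L(\vec{y}) \ge \epsilon] \le \delta$, then for an arbitrary measurable $\cT$ one splits $\cT$ into the sub-event where $L(\vec{y}) \le \epsilon$ — on which $p(\vec{y}\mid\vec{x}) \le e^\epsilon p(\vec{y}\mid\vec{x}')$, so this part of the mass is at most $e^\epsilon\Pr[M_g(\vec{x}')\in\cT]$ — and its complement, whose mass is at most $\Pr[L \ge \epsilon] \le \delta$; adding the two pieces yields \eqref{eq:DP}. So it suffices to establish the tail bound.

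Second, I would make the privacy loss explicit. Expanding the two Gaussian densities gives $L(\vec{y}) = \bigl(\norm{\vec{y}-\vec{x}'}^2 - \norm{\vec{y}-\vec{x}}^2\bigr)/(2\sigma^2)$, and substituting $\vec{y} = \vec{x} + \vec{z}$ leaves $L = \bigl(\langle\vec{z},\vec{v}\rangle + \tfrac12\norm{\vec{v}}^2\bigr)/\sigma^2$. Because $\vec{z}$ is isotropic, $\langle\vec{z},\vec{v}\rangle \sim \cN(0,\sigma^2\norm{\vec{v}}^2)$, so $L$ is a one-dimensional Gaussian with mean $\norm{\vec{v}}^2/(2\sigma^2)$ and variance $\norm{\vec{v}}^2/\sigma^2$ — this is precisely the reduction to the scalar case afforded by rotational invariance. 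Since $\Pr[L \ge \epsilon]$ is increasing in $\norm{\vec{v}}$ (both parameters grow with $\norm{\vec{v}}$, and a short monotonicity check confirms it), it is enough to handle the worst case $\norm{\vec{v}} = \Delta$, i.e. $L \sim \cN(\mu, 2\mu)$ with $\mu = \Delta^2/(2\sigma^2)$.

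Third, and this is the technical crux, I would verify that the prescribed $\sigma = \Delta\sqrt{2\ln(1.25/\delta)}/\epsilon$ forces $\Pr[L \ge \epsilon] \le \delta$. Writing $L = \mu + \sqrt{2\mu}\,g$ with $g \sim \cN(0,1)$, the event $L \ge \epsilon$ is $g \ge a$ with $a = (\epsilon - \mu)/\sqrt{2\mu}$; plugging in the given $\sigma$ gives $\mu = \epsilon^2/(4\ln(1.25/\delta))$, which is strictly less than $\epsilon$ — hence $a > 0$ — in the intended regime $\epsilon \in (0,1)$ with $\delta$ in the usual range. Then the elementary Gaussian tail bound $\Pr[g \ge a] \le e^{-a^2/2}/(a\sqrt{2\pi})$ reduces the claim to a single inequality in $\epsilon$ and $\delta$, and the constant $1.25$ is exactly what is needed to swallow both the $1/(a\sqrt{2\pi})$ prefactor and the extra term contributed by $\mu$. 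Carrying out this last inequality is the only real obstacle; everything else is bookkeeping, and it is also where the restriction $\epsilon,\delta \in (0,1)$ (rather than arbitrary positive parameters) is actually used.
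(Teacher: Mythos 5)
Your route is the one the paper itself relies on: the paper gives no in-text proof of this theorem but defers to Appendix~A of Dwork and Roth (2014), and your outline is precisely that classical privacy-loss analysis. The structural steps you give are sound: the reduction of $(\epsilon,\delta)$-DP to the one-sided tail bound $\Pr[L \geq \epsilon] \leq \delta$ (with symmetry of $\simeq$ covering the reversed pair), the identity $L = \bigl(\langle \vec{z},\vec{v}\rangle + \tfrac{1}{2}\norm{\vec{v}}^2\bigr)/\sigma^2$ giving $L \sim \mathcal{N}(\mu, 2\mu)$ with $\mu = \norm{\vec{v}}^2/(2\sigma^2)$, and the monotonicity argument that reduces everything to the worst case $\norm{\vec{v}} = \Delta$.

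The gap is that the step you defer \emph{is} the theorem. Verifying that $\sigma = \Delta\sqrt{2\log(1.25/\delta)}/\epsilon$ forces $\Pr[L \geq \epsilon] \leq \delta$ is exactly where the constants $1.25$ and $2$ earn their keep; in Dwork and Roth this is about a page of calculation (their condition $c^2 \geq 2\ln(1.25/\delta)$ for $\sigma = c\Delta/\epsilon$), and by labelling it ``the only real obstacle'' without carrying it out you have asserted the result rather than proved it. Moreover, the preparatory claim that $\mu = \epsilon^2/(4\ln(1.25/\delta)) < \epsilon$, hence $a > 0$, does not hold on the full stated range $\epsilon, \delta \in (0,1)$: it fails once $\delta > 1.25\,e^{-1/4} \approx 0.97$ (for instance $\epsilon = 0.95$, $\delta = 0.99$ gives $4\ln(1.25/\delta) \approx 0.93 < \epsilon$), and for $a \leq 0$ the upper-tail bound $e^{-a^2/2}/(a\sqrt{2\pi})$ you plan to invoke is unusable, so that corner needs either an explicit restriction on $\delta$ (as the classical proof implicitly makes) or a separate, easy argument exploiting that $\delta$ is then close to $1$. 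A cleaner alternative that stays inside this paper's own toolkit is to verify the exact condition of Theorem~2 (the Balle--Wang characterisation, proved in the paper's appendix via the function $g$) directly for this $\sigma$, i.e.\ to check $g\bigl(\sqrt{2\log(1.25/\delta)}/\epsilon\bigr) \leq \delta$; that is a comparable calculation but dispenses with the tail-bound bookkeeping and the sign issue altogether.
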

The proof of Theorem~\ref{th:gauss} can be found in the Appendix~A in~\cite{Dwork:2014}.

\section{Neighbourhood-Aware Differential Privacy (NADP)}
\label{sec:NADP}

\begin{algorithm}[t]
\DontPrintSemicolon
\caption{Nearest Neighbour Graph Construction}\label{algo:NN}
\KwSty{Inputs:} Word embeddings $\cX = \{\vec{x}_1, \ldots, \vec{x}_n\}$, top-$m$ for selecting neighbours, and similarity threshold $\tau \in [0,1]$. \;
\KwSty{Outputs:} Nearest neighbour graph $G(\cV,\cE)$\;
Initialise $\cV = \{x_1, \ldots, x_n\}$, $\cE = \{ \}$\;
\For{$i = 1, \ldots, n$}{
    \For{$x_j \in \cS_m(x_i)$}{
        \If{$\mathrm{Jaccard}(x_i, x_j) \geq \tau$}{
            $\cE = \cE + \{(i,j)\}$\;
        }
    }
}
Return $G(\cV,\cE)$
\end{algorithm}

Our proposed DP-mechanism, Neighbourhood-Aware Differential Privacy (NADP), consists of three main steps.
First, we create a nearest neighbour graph where vertices represent the words as described in \autoref{sec:NN}.
Next, we factorise this nearest neighbour graph into a set of mutually exclusive neighbourhoods by finding its connected components as described in \autoref{sec:CC}.
Finally, for the words that belong to each connected component, we add random noise sampled from Gaussian distributions with zero mean and \emph{different} standard deviations, determined according to the neighbourhood associated with the corresponding connected component. We prove that the proposed NADP mechanism is DP in \autoref{sec:noise}.

\subsection{Nearest Neighbour Graph Construction}
\label{sec:NN}

To represent the nearest neighbours of a set $\cX$ of words, we construct a nearest neighbour graph, $G=G(\cX, \simeq)$ with the symmetric neighbouring relation $\simeq$, vertex set $\cV(G) = \cX$ and edge set $\cE(G) = \{ (x, x^{\prime}) \mid x \simeq x^{\prime} \}$. 
Given the one-to-one mapping between words and the vertices in the graph, for notational simplicity we denote the $i$-th vertex of the graph by $x_i (\in \cX)$.
Two vertices $x_i$ and $x_j$ are connected by an edge $e_{ij} (\in \cE)$, if $x_i \simeq x_j$ holds between the corresponding words $x_i$ and $x_j$.
As already explained in \autoref{sec:DP}, we define two words $x_i, x_j \in \cX$ to be in a symmetric neighbouring relation, $x_i \simeq x_j$ if the following two conditions are jointly satisfied:
(a) $x_i \in \cS_{m}(x_j)$ or $x_j \in \cS_{m}(x_i)$, and
(b) $\mathrm{Jaccard}(x_i,x_j) \geq \tau$ for a predefined similarity threshold $\tau \in [0,1]$.

The pseudo code for constructing  the nearest neighbour graph is shown in \autoref{algo:NN}.
In our experiments, we set $m = 2$, which considers only the top-2 neighbours (i.e. $\cS_2$) to ensure only the highly similar neighbours are connected by edges in the nearest neighbour graph.
$\tau$ can be used to remove neighbours that have less similarity to a target word across the graph.
For example, by setting $\tau = 0.8$, we can ensure that no two words with neighbourhood similarity (measured using the Jaccard coefficient) less than $0.8$ will be connected by an edge in $\cG$.
We empirically study the effect of varying $\tau$ on NADP later in our experiments.

\subsection{Finding Connected Components}
\label{sec:CC}

Once a nearest neighbour graph $\cG$ is constructed for $\cX$, next we identify the regions of neighbours, which we refer to as the \emph{neighbourhoods}.
To consider tightly connected neighbourhoods, we propose to factorise $\cG$ into a set of mutually exclusive connected components following the procedure described in \autoref{algo:CC}.
We start by randomly selecting a word $x$ from $\cX$ and creating a neighbourhood $\cX_1$ consisting all of $x$'s neighbours.
We then remove the words in $\cX_1$ from $\cX$, and repeat this process until all words in $\cX$ are included in some neighbourhood.
The procedure described in \autoref{algo:CC} for obtaining connected components from $\cG$ is simple, efficient and obtains good DP performance in our experiments. 
Moreover, it does \emph{not} require the number of neighbourhoods, $k$, to be specified in advance as it would be the case for many clustering-based approaches for graph partitioning such as spectral clustering~\cite{Luxburg:2007}.
There is a possibility of obtaining long chains when computing connected components using \autoref{algo:CC}.
However, we did not encounter this issue in our experiments.
This is because the nearest neighbour relation that is defined in \autoref{sec:DP} requires both mutual nearest neighbourhood and high Jaccard similarity to be satisfied, which reduces the likelihood of forming long chains.
Exploring alternative methods for factorising a given graph into a set of mutually exclusive connected components is deferred to future work.

\begin{algorithm}[t]
\DontPrintSemicolon
\caption{Finding Connected Components}\label{algo:CC}
\KwSty{Inputs:} Nearest neighbour graph $G(\cV,\cE)$\; 
\KwSty{Outputs:} Connected components  $\{\cX_1, \ldots, \cX_k\}$\;
Define $k = 0$\;
Define $\cH_k = \{ \}$\;
\While{$\cX \setminus \cH_k \neq \emptyset$}{
    Choose $x \in \cX \setminus \cH_k$\;
    $k = k + 1$\;
    $\cX_k = \{x\}$\;
    Define $\cX' = \{x'|x' \simeq x\} \setminus \cX_k$\;
    $\cX_k = \cX_k \cup \cX'$\;
    $\cH_k = \cH_k \cup \cX_k$\;
    \While{$\cX' \neq \emptyset$}{
        $\cX' = \{x'|x' \simeq x \text{ for some } x \in \cX_k\} \setminus \cX_k$\;
        $\cX_k = \cX_k \cup \cX'$\;
        $\cH_k = \cH_k \cup \cX_k$\;
    }
}
Return $\{\cX_1, \ldots, \cX_k\}$
\end{algorithm}

\subsection{Perturbation of Word Embeddings}
\label{sec:noise}

In this section, we will first prove that NADP satisfies the DP conditions, and then present an algorithm that can be used to add perturbation noise to the words in each neighbourhood.
First note that the trivial relation $x \simeq x$ implies the set $\{ \norm{\vec{x}-\vec{y}} \mid x\simeq y \}$ is nonempty and hence we can consider the global $L_2$ sensitivity, $\Delta = \sup_{x \simeq y} \norm{\vec{x}-\vec{y}}$, for any two neighbouring words $x$ and $y$ in the given set of words $\cX$.
\newcite{pmlr-v80-balle18a} proved Theorem~\autoref{th:BW} that shows a set of word embeddings can be made differentially private by adding Gaussian noise sampled according to $\Delta$, where $\Phi(t)$ the Cumulative Density Function (CDF) of the standard univariate Gaussian distribution, given by \eqref{eq:CDF}.
\par\nobreak
{\small
\begin{align}
\label{eq:CDF}
    \Phi(t) = \frac{1}{\sqrt{2\pi}} \int_{-\infty}^{t} e^{-y^2/2} dy.
\end{align}}

\begin{thm}[Balle and Wang (2018)]
\label{th:BW}
Let $f : \mathbb{X} \rightarrow \R^d$ be a function with global $L_2$ sensitivity $\Delta > 0$. For any $\varepsilon \geq 0$ and $\delta \in [0,1]$, the Gaussian output perturbation mechanism $M(x) = \vec{x} + \vec{z}$ with $\vec{z} \sim \mathcal{N}(0,\sigma^2\mat{I}_d)$ $(\sigma > 0)$ is $(\varepsilon, \delta)$-DP if and only if
\small
\begin{align} 
\label{eq:DP-cond}
\Phi\left(\frac{\Delta}{2\sigma}-\frac{\varepsilon\sigma}{\Delta}\right) -e^{\varepsilon}\Phi\left(-\frac{\Delta}{2\sigma}-\frac{\varepsilon\sigma}{\Delta}\right) \leq \delta .
\end{align}
\end{thm}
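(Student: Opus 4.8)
The plan is to follow the tight (``analytic'') Gaussian-mechanism analysis of \newcite{pmlr-v80-balle18a}, reducing the $d$-dimensional worst-case-pair question to a one-dimensional computation with the privacy-loss random variable. First I would recall the characterisation of $(\varepsilon,\delta)$-DP through the privacy profile: $M(\vec x)=\vec x+\vec z$ is $(\varepsilon,\delta)$-DP precisely when, for every neighbouring pair $\vec x\simeq\vec x'$,
\[
\sup_{\cT}\big(\Pr[M(\vec x)\in\cT]-e^{\varepsilon}\Pr[M(\vec x')\in\cT]\big)\le\delta ,
\]
and that this supremum is attained by the likelihood-ratio set $\cT^{*}=\{\vec y:p_{\vec x}(\vec y)>e^{\varepsilon}p_{\vec x'}(\vec y)\}$, where $p_{\vec x}$ denotes the density of $M(\vec x)$. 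Evaluating on $\cT^{*}$ rewrites the left-hand side as $\Pr\nolimits_{\vec Y\sim p_{\vec x}}[L>\varepsilon]-e^{\varepsilon}\Pr\nolimits_{\vec Y\sim p_{\vec x'}}[L>\varepsilon]$, where $L=\log\big(p_{\vec x}(\vec Y)/p_{\vec x'}(\vec Y)\big)$ is the privacy loss, so the claim follows once we know the law of $L$ under each of the two densities.

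Second, I would exploit the isotropy of $\cN(\vec 0,\sigma^{2}\mat{I}_d)$. Because the noise is spherically symmetric, the displayed quantity depends on $\vec x,\vec x'$ only through $v=\norm{\vec x-\vec x'}$, and a comparison (data-processing) argument shows it is nondecreasing in $v$; hence the hardest neighbouring pair is one at the maximal distance $v=\Delta$, which by finiteness of $\cX$ is actually attained. Projecting onto the line through $\vec x$ and $\vec x'$ reduces everything to the univariate comparison of $\cN(0,\sigma^{2})$ with $\cN(\Delta,\sigma^{2})$, for which $L(y)=(\Delta^{2}-2\Delta y)/(2\sigma^{2})$ is an affine image of a Gaussian and therefore Gaussian: under $p_{\vec x}$ it is $\cN(\eta,2\eta)$ and under $p_{\vec x'}$ it is $\cN(-\eta,2\eta)$, with $\eta:=\Delta^{2}/(2\sigma^{2})$.

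Third, I would read off the two tail probabilities. Using $\sqrt{2\eta}=\Delta/\sigma$,
\[
\Pr\nolimits_{\vec Y\sim p_{\vec x}}[L>\varepsilon]=\Phi\Big(\tfrac{\eta-\varepsilon}{\sqrt{2\eta}}\Big)=\Phi\Big(\tfrac{\Delta}{2\sigma}-\tfrac{\varepsilon\sigma}{\Delta}\Big),\qquad
\Pr\nolimits_{\vec Y\sim p_{\vec x'}}[L>\varepsilon]=\Phi\Big(\tfrac{-\eta-\varepsilon}{\sqrt{2\eta}}\Big)=\Phi\Big(-\tfrac{\Delta}{2\sigma}-\tfrac{\varepsilon\sigma}{\Delta}\Big) ,
\]
so the worst-case privacy profile equals exactly $\Phi(\Delta/(2\sigma)-\varepsilon\sigma/\Delta)-e^{\varepsilon}\Phi(-\Delta/(2\sigma)-\varepsilon\sigma/\Delta)$. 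Since monotonicity in $v$ makes this an upper bound for \emph{every} neighbouring pair, while the maximal pair together with $\cT^{*}$ shows the bound is achieved, $M$ is $(\varepsilon,\delta)$-DP if and only if this quantity is at most $\delta$, which is \eqref{eq:DP-cond}. (The symmetric inequality required by Definition~\ref{def:DP} yields the same condition, because the reduced one-dimensional pair is invariant under the reflection $y\mapsto\Delta-y$.)

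The step I expect to be the main obstacle is the reduction in the second paragraph: rigorously justifying that the hockey-stick divergence between $\vec x+\vec z$ and $\vec x'+\vec z$ is monotone in $\norm{\vec x-\vec x'}$ and that the multivariate problem collapses, without loss of generality, to the one-dimensional one (a rotation/translation step plus a comparison inequality for the privacy loss), and then handling the ``only if'' direction carefully — one must exhibit the explicit worst-case output set $\cT^{*}$ and the explicit maximal neighbouring pair to argue that \eqref{eq:DP-cond} cannot be tightened. Once the one-dimensional reduction is in place, the Gaussian tail evaluations of the third paragraph are routine.
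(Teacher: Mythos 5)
Your proposal is correct, but it takes a genuinely different route from the paper. You re-derive the Balle--Wang characterisation itself: the hockey-stick/privacy-profile formulation, the worst-case event $\cT^{*}=\{\vec y: p_{\vec x}(\vec y)>e^{\varepsilon}p_{\vec x'}(\vec y)\}$, reduction by isotropy to one dimension, the privacy-loss variable being $\mathcal{N}(\pm\eta,2\eta)$ with $\eta=\Delta^{2}/2\sigma^{2}$, and the two Gaussian tail evaluations --- essentially the original argument of \newcite{pmlr-v80-balle18a}. The paper's appendix proof does something else: it does not re-establish the characterisation at all, but takes the analytic condition \eqref{eq:DP-cond} as the object, substitutes $u=\sigma/\Delta$, and proves that $g(u)=\Phi\left(\frac{1}{2u}-\varepsilon u\right)-e^{\varepsilon}\Phi\left(-\frac{1}{2u}-\varepsilon u\right)$ is strictly decreasing on $(0,\infty)$ with limits $1$ and $0$ (via the explicit derivative $h'(u)=-u^{-2}\exp\left(-\frac{1}{2}\left(\frac{1}{2u}-\varepsilon u\right)^{2}\right)<0$), so that a unique calibration $u^{*}$ with $g(u^{*})\leq\delta$ exists and $\sigma=u^{*}\Delta$ satisfies \eqref{eq:DP-cond}; the DP conclusion is then read off from the ``if'' direction. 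So the paper's ``alternative proof'' is really a calibration lemma that powers \autoref{algo:NADP} and the per-component extension in Theorem~\ref{th:main} and its corollary, whereas your argument establishes the full ``if and only if''. Two remarks that tie the approaches together: (i) the step you flag as the main obstacle --- monotonicity of the divergence in $v=\norm{\vec{x}-\vec{x}'}$ --- is exactly the paper's derivative computation, since with $u=\sigma/v$ the statement ``$g$ is strictly decreasing in $u$'' is equivalent to ``the divergence is strictly increasing in $v$ for fixed $\sigma$''; you can discharge it by that explicit calculation rather than by a data-processing argument, which does not apply directly because rescaling a pair of equal-variance Gaussians also rescales their variance; (ii) your observation that the worst-case pair is attained because $\cX$ is finite is what makes the ``only if'' direction legitimate in this setting --- for a general $f$ whose sensitivity is a supremum one would additionally need a continuity/limiting argument.
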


The original proof of Theorem~\autoref{th:BW} is provided in \cite{pmlr-v80-balle18a}.
However, in \autoref{sec:proof-Th2} we provide an alternative proof, which is more concise and can be directly extended to the case of multiple neighbourhoods represented by the connected components in the nearest neighbour graph.

Theorem~\ref{th:main} (see \autoref{sec:proof-Th3} for proof) states that NADP satisfies DP conditions.
\begin{thm}[main]
\label{th:main} 
Let $\{\cX_1, \cdots, \cX_k \}$ be the connected components of the graph $G(\cX, \simeq)$ and let $\sigma_i$ $(1 \leq i \leq k)$ be non-negative real numbers such that $\sigma_i > 0$ whenever $\Delta_i = \sup_{x \simeq y, x,y\in \cX_i} \norm{\vec{x}-\vec{y}} > 0$. For any $\vec{x} \in \cX$, let $i(x)$ $(1 \leq i(x) \leq k)$ be the index such that $x \in \cX_{i(x)}$.  
Then, for any $\varepsilon \geq 0$ and $\delta \in [0,1]$, the Gaussian output perturbation mechanism $M(x) = \vec{x} + \vec{z}$ with $\vec{z} \sim \mathcal{N}(0, \sigma^2_{i(x)}\mat{I}_d)$ is $(\varepsilon, \delta)$-DP if
\small
\begin{align}\label{1}
\Phi\left(\frac{\Delta_{i(x)}}{2\sigma_{i(x)}}-\frac{\varepsilon\sigma_{i(x)}}{\Delta_{i(x)}}\right) -e^{\varepsilon}\Phi\left(-\frac{\Delta_{i(x)}}{2\sigma_{i(x)}}-\frac{\varepsilon\sigma_{i(x)}}{\Delta_{i(x)}}\right) \leq \delta
\end{align}
for any $x \in \cX$ satisfying $\Delta_{i(x)} > 0$.
\end{thm}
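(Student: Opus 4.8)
The plan is to reduce Theorem~\ref{th:main} to the single‑neighbourhood statement of Theorem~\ref{th:BW}, exploiting the fact that the decomposition of $\cX$ into connected components is compatible with the neighbouring relation $\simeq$. First I would isolate the pivotal structural fact: \emph{if $x \simeq x'$ then $i(x) = i(x')$}. This is immediate, since $x \simeq x'$ means $(x,x') \in \cE(G)$, so $x$ and $x'$ lie in a common connected component of $G(\cX,\simeq)$, and the components $\{\cX_1,\dots,\cX_k\}$ are pairwise disjoint (indeed \autoref{algo:CC} closes each $\cX_i$ under $\simeq$). The consequence is that along any neighbouring pair the mechanism $M$ uses one and the same noise scale $\sigma_{i(x)}=\sigma_{i(x')}$, and the only sensitivity ever relevant to that pair is $\Delta_{i(x)}=\Delta_{i(x')}$; no ``cross‑component'' pair is ever checked, which is precisely the property that an arbitrary partition would lack.

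Next I would argue componentwise. Fix $i$ and restrict attention to the words in $\cX_i$. On this sub‑domain $M$ is literally the homogeneous Gaussian output perturbation mechanism $\vec x \mapsto \vec x + \vec z$ with $\vec z \sim \mathcal{N}(0,\sigma_i^2\mat{I}_d)$ applied to $f$ restricted to $\cX_i$, whose global $L_2$ sensitivity with respect to $\simeq$ restricted to $\cX_i \times \cX_i$ is exactly $\Delta_i = \sup_{x \simeq y,\, x,y \in \cX_i} \norm{\vec x - \vec y}$. If $\Delta_i > 0$, then by hypothesis $\sigma_i > 0$ and \eqref{1} holds with parameters $(\Delta_i,\sigma_i)$, so Theorem~\ref{th:BW} applied to this restricted function yields, for every $x \simeq x'$ with $x,x' \in \cX_i$ and every measurable $\cT$,
\begin{align*}
\mathrm{Pr}[M(\vec x) \in \cT] \le e^{\varepsilon}\,\mathrm{Pr}[M(\vec x') \in \cT] + \delta .
\end{align*}
If instead $\Delta_i = 0$, then by definition of the supremum $\vec x = \vec x'$ for every neighbouring pair inside $\cX_i$, so $M(\vec x)$ and $M(\vec x')$ are identically distributed and the inequality above holds trivially for any $\varepsilon \ge 0$, $\delta \in [0,1]$, whether or not $\sigma_i$ vanishes. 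Finally I would assemble the global claim: given any neighbouring pair $\vec x, \vec x' \in \cX$ and any measurable $\cT \in \cY$, the structural fact places $x,x'$ in a common $\cX_i$, and the componentwise bound just proved is exactly the inequality of Definition~\ref{def:DP}; since the pair and $\cT$ were arbitrary, $M$ is $(\varepsilon,\delta)$‑DP.

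The step I expect to be the main obstacle is the passage, inside the componentwise argument, from the single pair that realises $\Delta_i$ to \emph{all} neighbouring pairs in $\cX_i$, whose $L_2$ distance may be strictly smaller than $\Delta_i$: one needs the $(\varepsilon,\delta)$ privacy profile of the homogeneous Gaussian mechanism to be monotone in the distance between its two inputs, so that verifying \eqref{1} at $\Delta_i$ already certifies every smaller distance. I expect this monotonicity to be exactly what the alternative proof of Theorem~\ref{th:BW} in \autoref{sec:proof-Th2} makes transparent, and to be what lets that proof ``directly extend'' to the multi‑neighbourhood setting of Theorem~\ref{th:main}; everything else in the argument is bookkeeping about how connected components interact with $\simeq$.
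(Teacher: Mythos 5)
Your proposal is correct and follows essentially the same route as the paper: restrict $\simeq$ and $M$ to each connected component $\cX_i$, observe that $\Delta_i$ is exactly the global $L_2$ sensitivity of the restricted mechanism, invoke Theorem~\ref{th:BW} when $\Delta_i>0$, dispose of the $\Delta_i=0$ case by noting the neighbouring embeddings coincide, and assemble the global bound since every neighbouring pair lies in a single component. The ``main obstacle'' you flag is not actually an obstacle in either argument, because Theorem~\ref{th:BW} is stated in terms of the \emph{global} sensitivity of the (restricted) function and therefore already certifies the $(\varepsilon,\delta)$ inequality for every neighbouring pair in $\cX_i$, not only the pair attaining $\Delta_i$.
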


\begin{rem}
We have $\Delta_{i(x)} = 0$ iff the connected component $\cX_{i(x)}$ consists of only one word $x$. 
\end{rem}

\begin{algorithm}[t]
\DontPrintSemicolon
\caption{Neighbourhood-Aware Differential Privacy}\label{algo:NADP}
\KwSty{Inputs:} Connected components $\{\cX_1, \ldots, \cX_k\}$, $\epsilon \geq 0$, $\delta \in [0,1]$\;
\KwSty{Outputs:} Perturbed word embeddings $\hat{\cX} = \{\hat{\vec{x}}_1, \ldots, \hat{\vec{x}}_n\}$\;
Define $g(u) = \Phi\left( \frac{1}{2u} - \varepsilon u \right) - e^{\varepsilon} \Phi\left( -\frac{1}{2u}-\varepsilon u \right) $\;
Compute $u^{*}= \min \left\{ u \in \R_{> 0} \ \middle| \ g(u) \leq \delta \right\}$\;
Define $\hat{\cX} = \{\}$\;
\For{$i = 0, \ldots, k$}{
    Compute $\Delta_i = \sup_{x \simeq y, x,y\in \cX_i} \norm{\vec{x}-\vec{y}} $ \;
    Define $\sigma_i = u^{*} \Delta_i$\;
    \For{$x \in \cX_i$}{
        Sample $\vec{z} \sim \mathcal{N}(0, \sigma^2_i \mat{I}_d)$ \;
        $\hat{\vec{x}} = \vec{x} + \vec{z}$ \;
        $\hat{\cX} = \hat{\cX} + \{\hat{\vec{x}}\}$\;
        }
    }
Return $\hat{\cX}$
\end{algorithm}

Theorem~\ref{th:main} guarantees that the NADP mechanism described in \autoref{algo:NADP} for perturbing a set of word embeddings satisfies DP.
Specifically, we can first compute $u^*$ globally for all neighbourhoods (Line 4) as the minimiser of $g(u)$ (given by \eqref{eq:gu}) such that the DP-condition in \eqref{eq:DP-cond} is satisfied.
We can then determine the standard deviation, $\sigma_i$, corresponding to each neighbourhood, using $u^*$ and the local sensitivity, $\Delta_i$, computed from that neighbourhood.
Finally, we sample noise vectors from $\mathcal{N}(0, \sigma^2_i \mat{I}_d)$ and add to all word embeddings in each $\cX_i$.

\section{Experiments}
\label{sec:exp}

\begin{figure*}[t]
     \centering
     \begin{subfigure}[b]{0.4\textwidth}
         \centering
         \includegraphics[width=\textwidth]{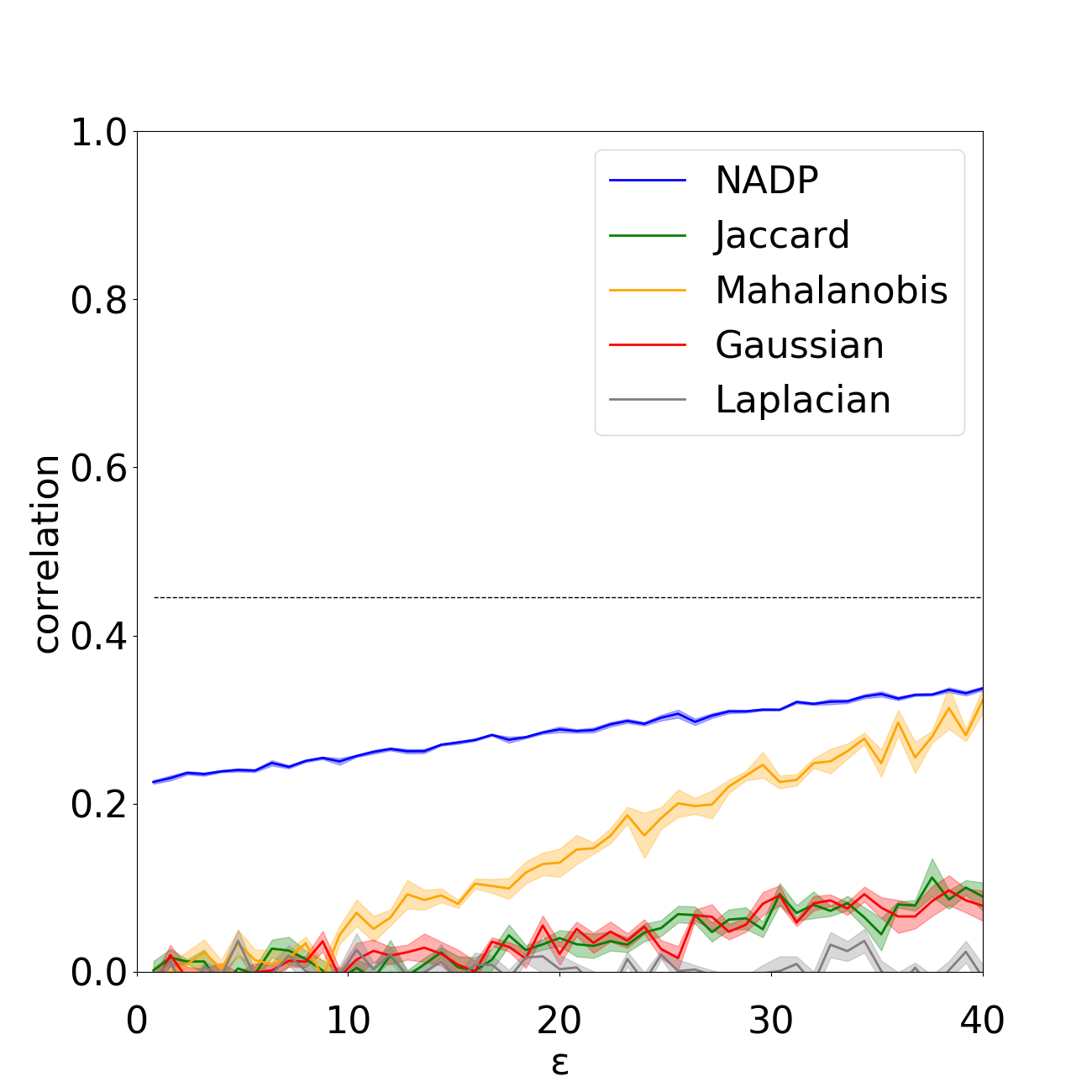}
         \caption{Word-pair similarity prediction.}
         \label{fig:word-sim}
     \end{subfigure}
     \begin{subfigure}[b]{0.4\textwidth}
         \centering
         \includegraphics[width=\textwidth]{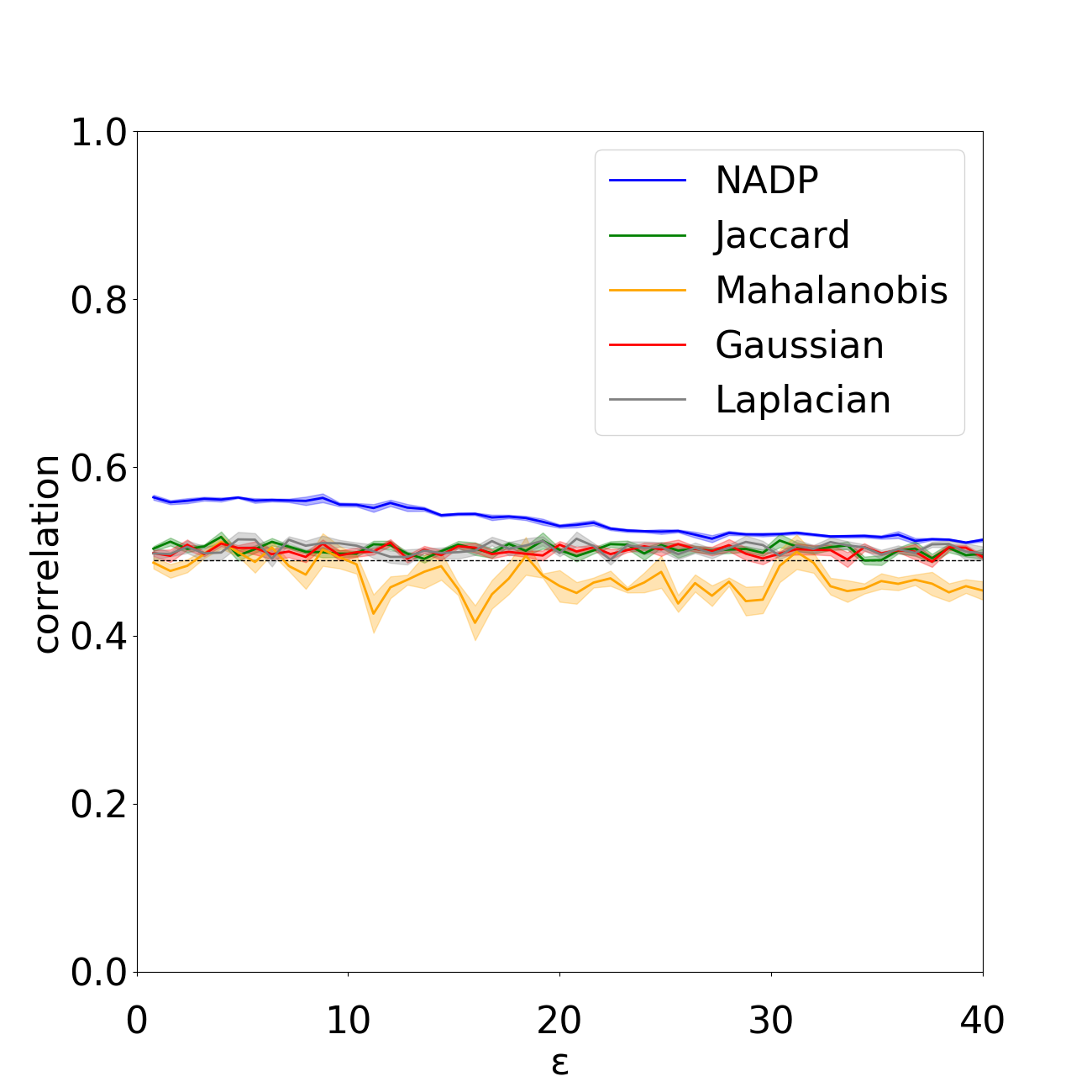}
         \caption{Semantic textual similarity measurement.}
         \label{fig:STS}
     \end{subfigure}
     \newline
     \begin{subfigure}[b]{0.4\textwidth}
         \centering
         \includegraphics[width=\textwidth]{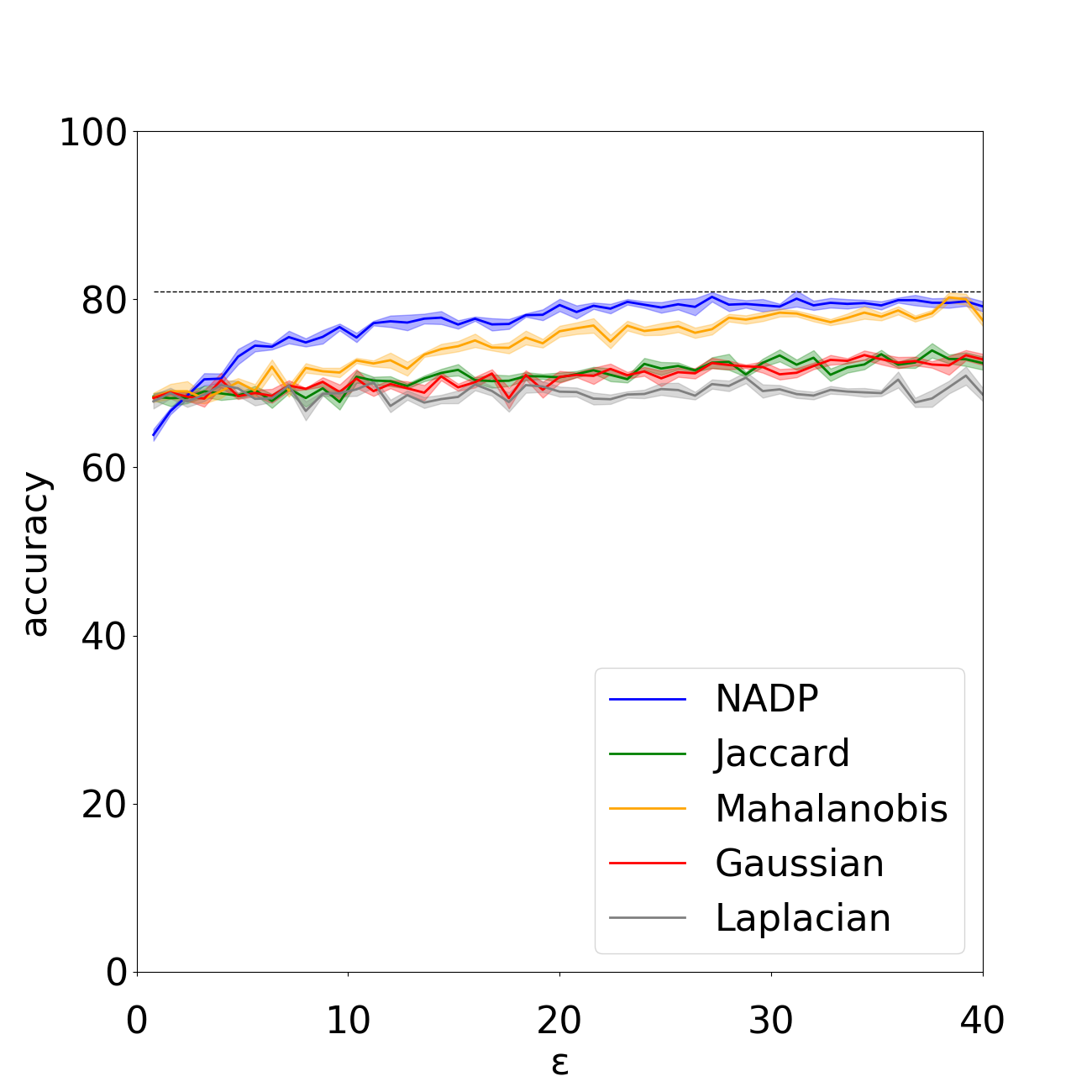}
         \caption{Text classification.}
         \label{fig:text-classification}
     \end{subfigure}
     \begin{subfigure}[b]{0.4\textwidth}
         \centering
         \includegraphics[width=\textwidth]{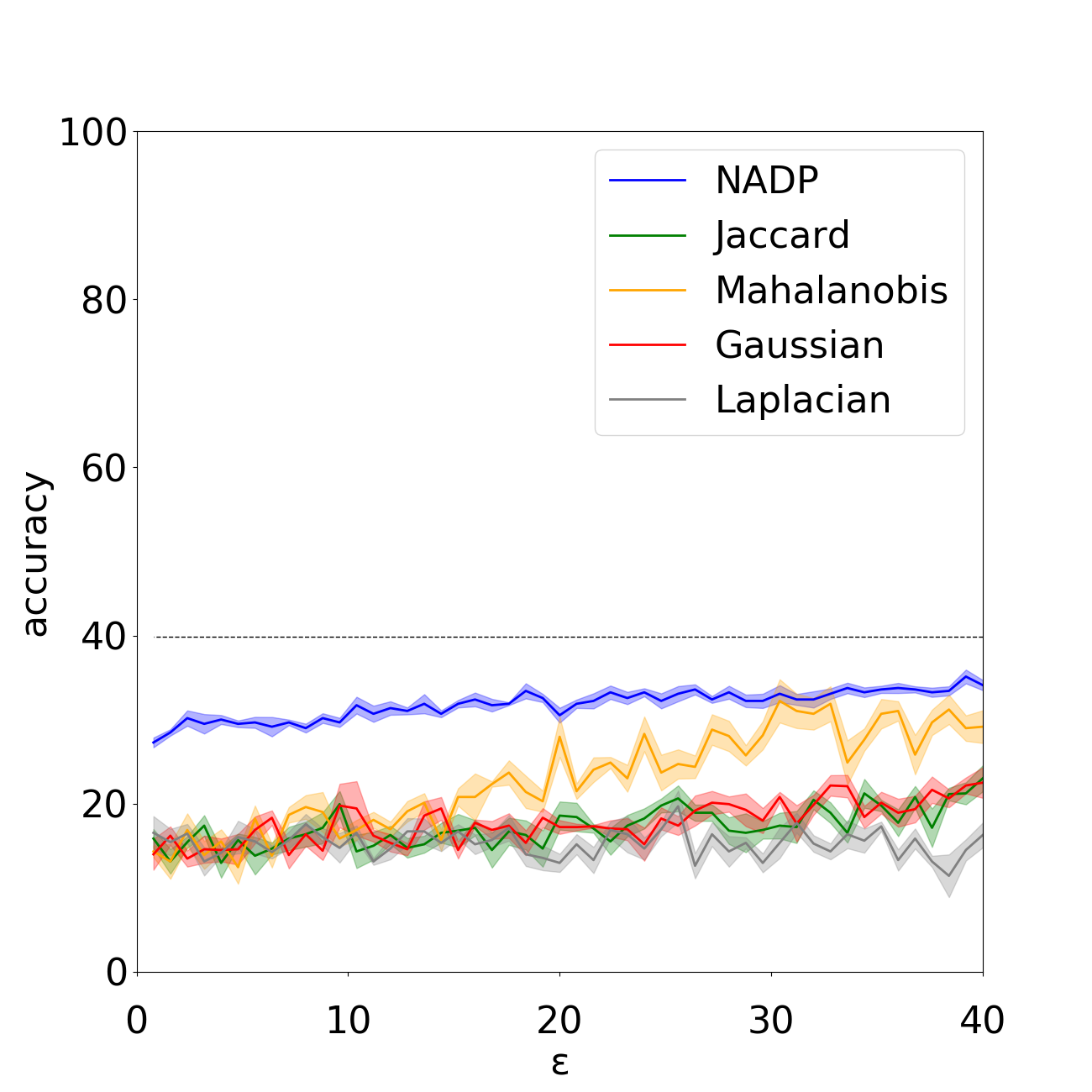}
         \caption{Odd-man-out.}
         \label{fig:odd-man-out}
     \end{subfigure}
        \caption{Performance on utility experiments (\autoref{sec:exp-utility}) shown in sub-figures (a)-(d). Accuracy and correlation (with human ratings) not decreasing with high privacy ($\epsilon$) levels (corresponding to stronger noise levels by DP mechanisms) is desirable. Performance obtained without adding any noise is shown by the horizontal dotted lines.}
        \label{fig:downstream}
\end{figure*}

We use the pretrained\footnote{We used 42B token Common Crawl trained embeddings available at \url{https://nlp.stanford.edu/projects/glove/}} 300-dimensional GloVe embeddings~\cite{Pennington:EMNLP:2014} for 2.8M words, which have also been used in much prior work~\cite{xu-etal-2020-differentially,feyisetan-kasiviswanathan-2021-private} as the static word embeddings.

We build a nearest neighbour graph using the top-1000 frequent words in the English Wikipedia, which resulted in a 73,404 vertex graph.
It takes less than 5 minutes to find all connected components of a graph containing 73,404 words used in the paper. Moreover, this is a task independent pre-processing step. Building the neighbourhood graph in a brute force manner requires 3.5 hours, while approximate nearest neighbour methods such as SCANN~\cite{scann} an be used to do the same in less than 1 minute with over 95\% recall.

In our experiments, we compare NADP against the following DP mechanisms:
\textbf{Gaussian} mechanism described in \autoref{sec:gaussian},
\textbf{Laplacian} mechanism, where noise vectors are sampled from the Laplace distribution with zero location parameter and with different values of the $\epsilon$ scale parameter,
\textbf{Mahalanobis} mechanism with the recommended parameter values by~\newcite{xu-etal-2020-differentially}  (i.e. the Mahalanobis norm $\lambda = 1$ and $\epsilon \in (0, 40]$ are used), which is the current SoTA DP mechanism for static word embeddings.

All of the above mentioned DP-mechanisms apply the same level of random noise to all word embeddings.
Therefore, to understand the importance of assigning different levels of noise to different words, we consider a baseline DP mechanism, which we call the \textbf{Jaccard} mechanism.
We define the density, $\eta(x)$, of the neighbourhood, $\cS_{k}(x)$, of a word $x$ as the average Euclidean distance between $x$ and its nearest neighbours (i.e. $\eta(x) = \frac{1}{k} \sum_{x' \in \cS_{k}(x)}\norm{\vec{x} - \vec{x'}}$).
Next, we categorise words into two density categories: dense ($\cX_{1} = \{x | x \in \cX, \eta(x) < \eta_{0}\}$) vs. sparse ($\cX_{2} = \{x | x \in \cX, \eta(x) \geq \eta_{0}\}$), based on a density threshold $\eta_0$.
Our preliminary experiments showed that splitting into more than two categories did not result in significant performance gains.
For a word $x \in \cX_{i}$, we sample a random vector $\vec{n}(x) \sim \cN(\vec{0}, \sigma_{i}\mat{I}_{d})$, for $i \in \{1,2\}$ and add to $\vec{x}$.
Jaccard is a DP mechanism (see \autoref{sec:proof-Jaccard} for the proof).
Note that the density threshold is used only by the Jaccard mechanism and is not required by NADP.
It is determined automatically such that we get approximately similar numbers of words in the dense and sparse sets.

\subsection{Utility Experiments}
\label{sec:exp-utility}

To evaluate the semantic information preserved in word embeddings, we use the following standard tasks that have been used in much prior work for this purpose~\cite{Bollegala:IJCAIa:2022,Bollegala:IJCAIb:2022,tsvetkov-EtAl:2015:EMNLP,faruqui-EtAl:2015:NAACL-HLT}: 
word similarity measurement, semantic textual similarity (STS), Text Classification, Odd-man-out~\cite{stanovsky-hopkins-2018-spot}.
Due to space limitations, we detail the tasks, datasets and evaluation metrics in \autoref{sec:tasks}.

\noindent\textbf{Results:}
\autoref{fig:downstream} shows the performance obtained on utility experiments with noise-added word embeddings for different values of the privacy parameter $\epsilon$, where we use $\tau = 0.5$.
The total set of words used in the datasets for all utility experiments is $n = 73404$.
Therefore, we set $\delta = 1/73404 \approx 0.000013623$ in all experiments reported in the paper.
We repeat each experiment five times and plot the mean and the standard error.
Recall that smaller $\epsilon$ values provide stronger DP guarantees.
From \autoref{fig:downstream}, we see that NADP reports the best performance on all four tasks among the methods compared across all $\epsilon$ values.
Among the other methods, Mahalanobis performs second best to NADP in word-pair similarity prediction, text classification and odd-man-out, but performs worst in STS.
In word-pair similarity prediction, text classification and odd-man-out tasks, we see the performance of NADP as well as the other methods increase with $\epsilon$ due to less noise being added to the word embeddings.

The performance in STS is comparatively less affected by $\epsilon$ because it is a sentence-level comparison task, which considers all perturbed word embeddings in a sentence, whereas the other three are word-level tasks.
We see that Jaccard and Gaussian mechanisms perform similarly in all tasks.
This is not surprising given that the Jaccard mechanism is drawing the noise vectors from two independent Gaussian distributions.
In particular for high $\epsilon$ values, we see that Gaussian outperforms Laplacian in word-pair similarity prediction, text classification and odd-man-out tasks.
This result implies that for making word embeddings differentially private, the $L_2$ sensitivity considered in the Gaussian mechanism is more appropriate than the $L_1$ sensitivity considered in the Laplacian mechanism.

\subsection{Privacy Experiments}
\label{sec:exp-privacy}

\begin{figure}[t!]
         \centering
         \includegraphics[width=0.45\textwidth]{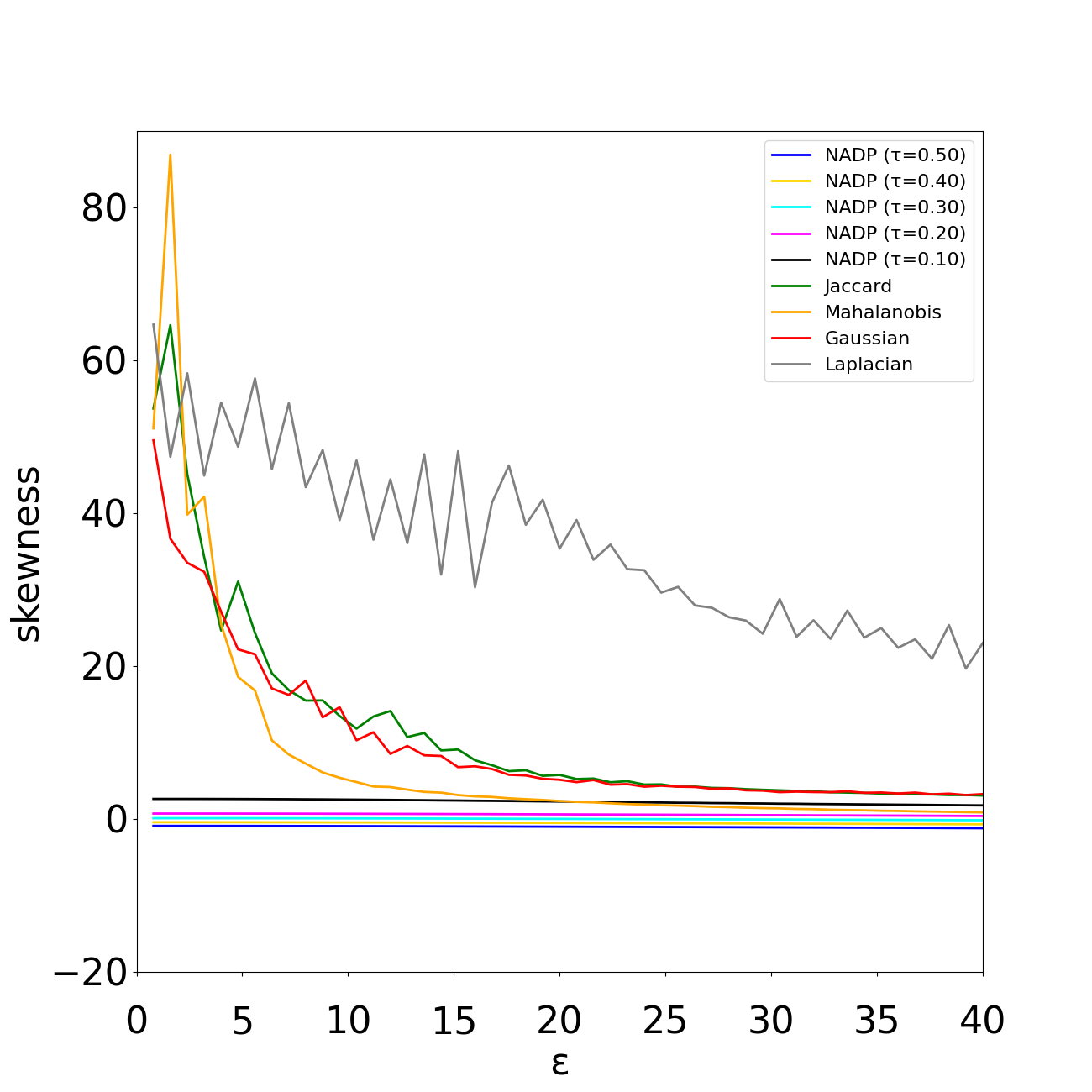}
    \caption{Skewness values for predicting words using their noise-added embeddings. Low skewness values are desirable, and indicate that the prediction probability distribution is similar to the Normal distribution and is not skewed towards a subset of the words.}
    \label{fig:skewness}
\end{figure}

\begin{table*}[t]
    \centering
    \begin{tabular}{l l l l} 
        word & no-noise & Mahalanobis & NADP \\ \toprule
        misogynist & sexist, chauvinst, bigot & insulting, sexist, racist & scholastic, filicia, shannara \\
        police & officers, cops, authorities & police, authorities, officials & posing, smiling, lying \\
        hitler & adlof, nazi, stalin & hitler, adolf, nazi & paid, ipo, raided \\
        wikileaks & assange, cia, leaked & wikileaks, iran, assange & impetuous, jashari, enraged \\
        FBI & cia, investigation, informant & fbi, history, government & asylum, cardoza, sandiego \\
    \end{tabular}
    \caption{Top 3 neighbours for words without noise addition to the embeddings (\textbf{no-noise}), with SoTA \textbf{Mahalanobis} mechanism and the proposed \textbf{NADP} mechanism. Mahalanobis mechanism sometimes discloses the original word, whereas NADP mechanism never does. }
    \label{tbl:nns}
\end{table*}

To empirically measure the level of privacy protected by a DP mechanism, we consider, $p(x|M(x))$, the probability of predicting the word $x$ using its noise-added embedding $M(x)$, as a metric of privacy provided by a DP mechanism.
However, it is difficult to accurately estimate probability densities in discrete spaces due to data sparseness.
Therefore, we approximate $p(x|M(x))$ by $\frac{|\cS_m(x) \cap \cS_m(M(x))|}{|\cS_m(x) \cup \cS_m(M(x))|}$, using the nearest neighbour sets.
It is noteworthy that this is a conservative estimate of $p(x|M(x))$ because, even if all of the nearest neighbours of $x$ and $M(x)$ fully overlap , there will still be a $1/m$ uncertainty ensuring a nonzero level of privacy.

Due to the differences in neighbourhood densities, some words are likely to be influenced more than the others by a DP mechanism.
From a DP point of view we are interested in protecting the privacy of all words in the vocabulary and not just for a subset of it.
Therefore, to empirically quantify the global effect on privacy of a DP mechanism, we compute the \emph{skewness} of the distribution of the estimated $p(x|M(x))$ values.
If most words $x_i$ has lower $p_i = p(x_i | M(x_i))$ values, the probability mass of the $p_i$ distribution will be shifted to the left of the mean, resulting in smaller skewness values (see \autoref{sec:skewness-privacy} for further explanations).
Therefore, smaller skewness values indicate that most words are protected (the probability of being discovered is smaller than the mean) under a DP mechanism.

\noindent\textbf{Results:} \autoref{fig:skewness} shows the skewness values reported by Jaccard, Mahalanobis, Gaussian, Laplacian mechanisms and the proposed NADP (for different $\tau$) mechanism for different $\epsilon$ values.
Overall, we see that NADP reports the lowest skewness values among all DP mechanism compared, indicating that it protects privacy of word embeddings well.
We see that the skewness values slightly increase with $\tau$. 
Recall that when $\tau$ increases the similarity of the neighbours connected to a target word by the symmetric neighbouring relation, $\simeq$, increases in the nearest neighbour graph $\cG$.
Therefore, when $\tau$ is high, unless when we apply stronger random noise to word embeddings, it becomes easier to discover the original word from its noise-added embedding.
However, we note that the performance of NADP is relatively unaffected by different $\tau$ values and skewness values are low for $\tau = 0.1$ setting, which we use in the utility experiments described in \autoref{sec:exp-utility}.
Although Gaussian, Jaccard and Mahalanobis mechanisms obtain comparable levels of skewness values when $\epsilon > 15$,
for $\epsilon < 5$, where stronger privacy guarantee is required, NADP is the only DP mechanism with near-zero skewness values.

\section{Investigating the Nearest Neighbours}
\label{sec:exp-NN}

To obtain qualitative insights into the levels of privacy provided by NADP, for a given word, we compare its top-3 neighbours in the original embedding space (no-noise added), when Mahalanobis and NADP mechanisms are used to add random noise.
\autoref{tbl:nns} shows the results for some randomly selected set of words.
We see that for the words such as \emph{police}, \emph{hitler}, \emph{wikileaks} and \emph{fbi}, even after applying the Mahalanobis mechanism ($\lambda = 1$), we still retrieve the original word as a nearest neighbour.
This indicates that Mahalanobis mechanism is unable to anonymise the target words in these cases.
Although not reported here due to space limitations, this problem persists even in Jaccard, Gaussian and Laplace mechanisms, which were under performing to the Mahalanobis mechanism in utility and privacy experiments.
In the case of \emph{misogynist}, Mahalanobis mechanism retrieves highly similar neighbours such as \emph{sexist}.
On the other hand, the neighbours retrieved from the word embeddings anonymised using NADP are semantically less similar to the target word, thus could be considered to be better preserving the privacy of the target word.

\section{Conclusion}
We proposed NADP to make word embeddings indistinguishable from their nearest neighbours with theoretical DP guarantees.
We compared NADP against existing DP mechanisms in multiple downstream utility experiments which showed its superior performance.
Moreover, we evaluated the level of privacy protection provided by NADP against other DP mechanisms.
We found NADP to provide stronger privacy guarantees over a broad range of $\epsilon$ values.
In our future work, we plan to extend NADP to sentence/document embeddings and evaluate for languages other than English.

\section{Ethical Considerations}

We do not annotate or release any datasets as part of this research.
However, the GloVe word embeddings that we use in our experiments are known to contain various types of unfair social biases such as gender and racial biases~\cite{Zhao:2018a,kaneko-bollegala-2019-gender,Kaneko:2021,gonen-goldberg-2019-lipstick}.
It is possible that these biases could get further amplified during the neighbourhood computation and noise-addition processes we perform in this work.
Therefore, such social biases must be properly evaluated before the noise-added word embeddings produced by our proposed method are used in real-world NLP applications that are used by users.

\section{Limitations}

Our investigations in this paper was limited to GloVe embeddings, which is one the many avaulable pre-trained static word embeddings.
There are other alternative word embeddings such as Skip-Gram with Negative Sampling (SGNS)~\cite{Milkov:2013}, PMI-based word embeddings~\cite{Arora:TACL:2016}, fastText embeddings~\cite{bojanowski-etal-2017-enriching} etc. that could be used in place of GloVe.
However, contextualised word embeddings, obtained using pre-trained Masked Language Models (MLMs) such as BERT~\cite{BERT}, RoBERTa~\cite{RoBERTa}, ALBERT~\cite{ALBERT}, etc. have reported superior performance in various downstream tasks, surpassing that by static word embeddings.
Therefore, we consider it to be a natural next step to extend our proposed method to anonymise contextualised word embeddings.
The theoretical tools that we develop in this paper should be helpful in proving DP conditions for contextualised word embeddings as well.

All the downstream datasets and word embeddings we considered in this work are limited to the English language, which is known to be a morphologically limited language.
Therefore, it is important to evaluate our proposed method on other languages using multilingual word embeddings to verify its effectiveness for the languages other than English.

\bibliography{black.bib}
\bibliographystyle{acl_natbib}

\appendix

\section*{Supplementary Materials}

\section{Downstream Tasks, Datasets and Evaluation Metrics}
\label{sec:tasks}

\begin{description}
\item[Word Similarity:] The cosine similarity between two words, computed using their word embeddings, is compared against the human similarity ratings using the Spearman correlation coefficient. 
High degree of correlation with human similarity ratings implies that the word embeddings accurately encode the word-level semantics.
We aggregate all of the word-pairs and their human similarity ratings in MEN~\cite{MEN}, SimLex~\cite{SimLex} and SimVerb~\cite{gerz-etal-2016-simverb} benchmark datasets and report the overall Spearman correlation in \autoref{fig:word-sim}.

\item[Semantic Textual Similarity (STS):]
In STS, we are provided with sentence-pairs and the human similarity ratings between the two sentences in each pair. 
Using the word embeddings, we first create an embedding for each sentence and then compute the cosine similarity between the sentence embeddings.
The correlation between the predicted sentence similarities and the human ratings is used as the evaluation metric.
We represent each sentence by the centroid of the word embeddings corresponding to the words included in that sentence.
Although this is a simple method for creating sentence embeddings from word embeddings, it is known to be a strong unsupervised baseline~\cite{Arora:ICLR:2017}, and enables us to directly attribute any differences in performance to the word embeddings -- the focus in this work.
We use the STS Benchmark dataset~\cite{cer-etal-2017-semeval}, which contains 1379 test sentence-pairs and show the official score (i.e. class-weighted geometric mean of Spearman and Pearson correlation) in \autoref{fig:STS}.

\item[Text Classification:]
We train a binary classifier to predict the sentiment (positive vs. negative) of a short review text.
Similar to the STS task, we represent a review using the centroid of the word embeddings of the words included in that review.
We train a binary logistic regression model to predict sentiment in a review and in \autoref{fig:text-classification} report the averaged classification accuracy on the balanced test sets in three standard datasets: Movie reviews dataset~\cite{Pang:ACL:2005}, customer reviews dataset~\cite{Hu:KDD:2004} and opinion polarity dataset~\cite{MPQA}.

\item[Odd-man-out:]
Stanovsky and Hopkins~\cite{stanovsky-hopkins-2018-spot} proposed the \emph{odd-man-out} task, where given a set of five or more words, a system is required to choose the one which does not belong with the others. 
They annotated a dataset containing 843 sets via crowd sourcing.
Pretrained word embeddings can be used to identify the odd-man in a set by repeatedly excluding one word at a time and measuring the average cosine similarity between all remaining pairs of words.
Finally, the word when excluded resulting in the highest pairwise similarity is chosen as the odd-man.
Unlike previously described tasks, odd-man-out can be carried out in an unsupervised manner, at word-level, and has higher human agreement between the annotators because it does not require numerical ratings.
The percentage of correctly solved sets is shown in \autoref{fig:odd-man-out}.
\end{description}

\section{Skewness and Privacy}
\label{sec:skewness-privacy}

Skewness is a measure of the asymmetry of $p(x|M(x))$ about its mean and can be positive, negative or zero depending on whether $p(x|M(x))$ has respectively a longer left tail, right tail, or perfectly symmetric around the mean (e.g. as in the case of the standard Normal distribution)~\cite{Joanes:1998}.
Specifically, if we denote the probability of predicting $i$-th word $x_i$ by $p_i = p(x_i | M(x_i))$, the skewness of the distribution of $p_i$ over $n$ words is given by
$\frac{n}{(n-1)(n-2)}\sum_{i=1}^{n}{\left( \frac{p_i - \bar{p}_i}{s} \right)}^3$, where $\bar{p}_i$ and $s$ are respectively the mean and standard deviation of $\{p_i\}_{i=1}^{n}$.

\begin{figure*}[t!]
    \centering
    \includegraphics[width=\textwidth]{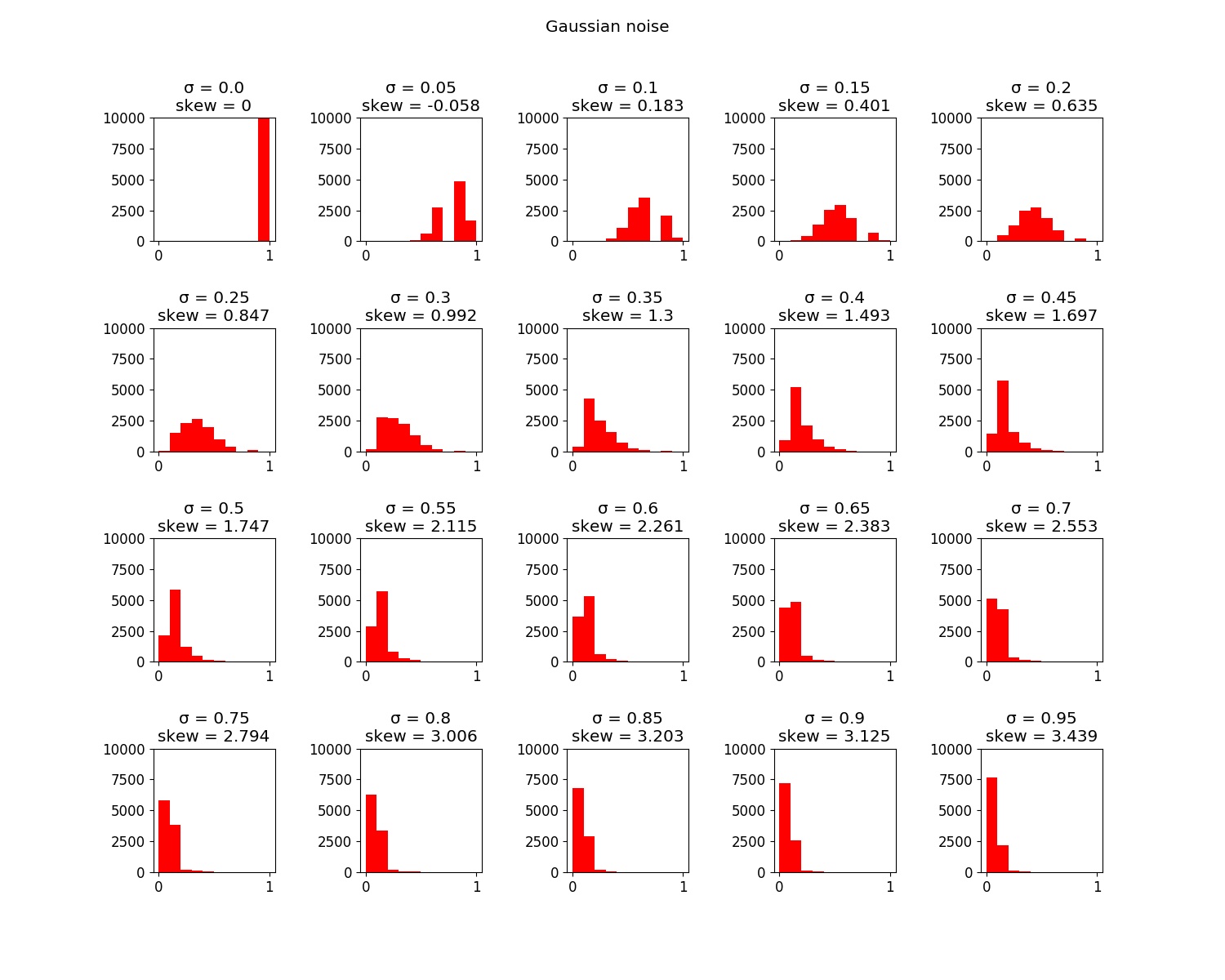}
    \caption{Histogram of $p_i$ values when zero-mean and $\sigma$ standard deviation Gaussian noise is added to the word embeddings. Skewness values (skew) are shown in each histogram alongside with the $\sigma$.}
    \label{fig:skeweness-privacy}
\end{figure*}

We study the relationship between the level of privacy protected by the noise added using a particular DP mechanism, $M$ and the skewness of the distribution of $p(x|M(x))$ for the words $w$ in a vocabulary $\cX$.
For this purpose, we use the Gaussian mechanism described in \autoref{sec:gaussian} in the paper where we sample noise vectors $\vec{z} \in \R^d$ from the $d$-dimensional spherical Gaussian $\cN(0, \sigma\mat{I}_d)$ with zero-mean and standard deviation $\sigma$, and add this noise to the word embedding, $\vec{x} \in \R^d$, representing the word $x$. Specifically, $M(x) = \vec{x} + \vec{z}$.
Next, we gradually increase $\sigma \in [0,1]$ in step size of $0.05$ and compute the histograms of $p(x|M(x))$ values for the words in $\cX$.
The histograms and their skewness values are shown in \autoref{fig:skeweness-privacy}.

From \autoref{fig:skeweness-privacy}, we see that when no-noise is being added (i.e. $\sigma = 0$), the histogram peaks at $1$, indicating that all words can be trivially discovered from their word embeddings because the closest neighbour of any target word in the embedding space will be itself.
Because the distribution is symmetric around this peak, we have a zero skewness.
Overall, we see that when we add increasingly high noise, the histograms start shifting towards to the left because less words will be perfectly discovered from the noise added embeddings.
Moreover, we see that more probability mass is distributed towards the right side of the mode (peak), resulting in a longer right tail.
Consequently, we see skewness values also continuously increase (except at $\sigma = 0.05$, where the distribution has split into two parts) with $\sigma$.
This trend stems from the definition of skewness and is independent of the DP mechanism used to generate noise.
This result shows that when there are many words with smaller  $p(x|M(x))$ values (i.e. distribution has a longer left tail), the skewness values will be smaller, indicating that the privacy is preserved for many words in $\cX$.

\section{Proofs of Theorems}

\subsection{Proof of Theorem~2}
\label{sec:proof-Th2}

\begin{proof}
For any $\varepsilon \geq 0$ and $\delta \in (0,1)$, put 
{\small
\begin{align}
\label{eq:gu}
g(u) = 
\Phi\left( \frac{1}{2u} - \varepsilon u \right) - e^{\varepsilon} \Phi\left( -\frac{1}{2u}-\varepsilon u \right) 
\end{align}}
and $u^{*}= \min \left\{ u \in \R_{> 0} \ \middle| \ g(u) \leq \delta \right\}$. \\[2mm]
We will prove the following three statements:\\
(a) $g(u)$ is strictly decreasing on $(0, \infty)$ with $\lim_{u \to +0} g(u) = 1$, $\lim_{u \to \infty} g(u)= 0$. \\
(b) The equation $g(u) = \delta$ has the unique solution $u^{*}$. \\
(c) Put $\sigma = u^{*}\Delta$. Then, 
the Gaussian output perturbation mechanism $M(x) = \vec{x} + \vec{z}$ with $\vec{z} \sim \mathcal{N}(0, \sigma^2\mat{I}_d)$ is $(\varepsilon, \delta)$-DP.\\
To prove (a) Put
{\small
\begin{align*}
 h(u) &= \sqrt{2\pi}g(u) \\
 &= \sqrt{2\pi}\left\{\Phi\left(\frac{1}{2u}-\varepsilon u \right) -e^{\varepsilon}\Phi\left(-\frac{1}{2u}-\varepsilon u\right)\right\}.
\end{align*}}
Then,
{\small
\begin{align*}
h^{\prime}(u) &= \exp\left( -\frac{1}{2}\left( \frac{1}{2u} - \varepsilon u \right)^2 \right) \cdot \left( -\frac{1}{2u^2}-\varepsilon \right) \\
&-\exp(\varepsilon) \cdot \exp\left( -\frac{1}{2}\left( -\frac{1}{2u} - \varepsilon u \right)^2 \right) \cdot \left( \frac{1}{2u^2}-\varepsilon \right) \\
&= \exp\left( -\frac{1}{2}\left( \frac{1}{2u} - \varepsilon u \right)^2 \right) \cdot \left( -\frac{1}{2u^2}-\varepsilon \right) \\
&
-\exp\left( -\frac{1}{2}\left( \frac{1}{2u} - \varepsilon u \right)^2 \right) \cdot \left( \frac{1}{2u^2}-\varepsilon \right) \\
&=-\frac{1}{u^2} \cdot \exp\left( -\frac{1}{2}\left( \frac{1}{2u} - \varepsilon u \right)^2 \right) < 0
\end{align*}}
for any $u > 0$. Therefore, $g(u) = (1/\sqrt{2\pi})h(u)$ is strictly decreasing on $(0, \infty)$. The latter half of the statement is clear from the definition of $g(u)$. \\
Next, to prove (b) observe that since $g(u)$ is a continuous function on $(0, \infty)$ satisfying $\lim_{u \to +0} g(u) = 1$ and $\lim_{u \to \infty} g(u)= 0$, $g(u) = \delta$ has a solution $u^{\prime}$ for any $\delta \in (0,1)$, which must be unique and satisfy $u^{\prime}=u^{*}$ because of the monotonicity of $g(u)$. \\
Finally, to prove (c) let $\sigma = u^{*}\Delta$. We then have
{\small
\begin{align*}
&\Phi\left(\frac{\Delta}{2\sigma}-\frac{\varepsilon\sigma}{\Delta}\right) -e^{\varepsilon}\Phi\left(-\frac{\Delta}{2\sigma}-\frac{\varepsilon\sigma}{\Delta}\right) \\
&= \Phi\left( \frac{1}{2u^*} - \varepsilon u^* \right) - e^{\varepsilon} \Phi\left( -\frac{1}{2u^*}-\varepsilon u^* \right) = g(u^{*}) \leq \delta, 
\end{align*}}
which implies the mechanism $M(x) = \vec{x} + \vec{z}$ with $ \vec{z} \sim \mathcal{N}(0, \sigma^2\mat{I}_d)$ is $(\varepsilon, \delta)$-DP as stated in Theorem~\ref{th:BW}.
\end{proof}

\subsection{Proof of Theorem~3}
\label{sec:proof-Th3}

\begin{proof}
For any $i$ $(1 \leq i \leq k)$, let $\simeq_i$ be the symmetric neighbouring relation obtained by restricting the relation $\simeq$ on $\cX_i$. Then, $\Delta_i$ equals to the global $L_2$ sensitivity of $(\cX_i, \simeq_i)$ and $\Delta_{i(x)} = \Delta_i$, $\sigma_{i(x)} = \sigma_i$ for any $x \in \cX_i$. Hence, if $\Delta_{i} > 0$, the mechanism $M_i$ obtained by restricting $M$ on $(\cX_i, \simeq_i)$ is $(\varepsilon, \delta)$-DP if and only if
{\small
\begin{align*}
\Phi\left(\frac{\Delta_{i(x)}}{2\sigma_{i(x)}}-\frac{\varepsilon\sigma_{i(x)}}{\Delta_{i(x)}}\right) -e^{\varepsilon}\Phi\left(-\frac{\Delta_{i(x)}}{2\sigma_{i(x)}}-\frac{\varepsilon\sigma_{i(x)}}{\Delta_{i(x)}}\right) \leq \delta 
\end{align*}}
for any $x \in \cX_i$ by Theorem~\ref{th:BW}. 

Let $x, x^{\prime} \in \cX$ be words such that $x \simeq x^{\prime}$ and $i$ $(=i(x)=i(x^{\prime}))$ be the index such that $x ,x^{\prime} \in \cX_{i}$, that is, $x \simeq_{i} x^{\prime}$. Now, suppose $\Delta_i > 0$ and (\ref{1}) holds for any $x \in \cX_i$. Then, the mechanism $M_i$ is $(\varepsilon, \delta)$-DP and hence, we have
\begin{align*}
\Pro[M(x) \in E] &= \Pro[M_{i}(x) \in E] \\
&\leq e^{\varepsilon}\Pro[M_{i}(x^{\prime}) \in E] + \delta \\
&= e^{\varepsilon}\Pro[M(x^{\prime}) \in E] + \delta
\end{align*}
for any measurable set $E \subset \R$. Next, suppose $\Delta_{i(x)} = \Delta_{i(x^{\prime})} = 0$. Then, we have $x = x^{\prime}$ and hence
\begin{align*}
\Pro[M(x) \in E] \leq e^{\varepsilon}\Pro[M(x^{\prime}) \in E] + \delta
\end{align*}
for any measurable set $E \subset \R$, which implies the mechanism $M$ is $(\varepsilon, \delta)$-DP if the condition (\ref{1}) holds for any $x \in \cX$ satisfying $\Delta_{i(x)} > 0$. 
\end{proof}

\begin{cor}
For any $\varepsilon \geq 0$ and $\delta \in (0,1)$, put 
{\small
\begin{align*}
g(u) = 
\Phi\left( \frac{1}{2u} - \varepsilon u \right) - e^{\varepsilon} \Phi\left( -\frac{1}{2u}-\varepsilon u \right) 
\end{align*}}
and $u^{*}= \min \left\{ u \in \R_{> 0} \ \middle| \ g(u) \leq \delta \right\}$. Also, put $\sigma_{i(x)} = u^{*}\Delta_{i(x)}$ for any $x \in \cX$.
Then, the Gaussian output perturbation mechanism $M(x) = \vec{x} +\vec{z}$ with $\vec{z} \sim \mathcal{N}(0, {\sigma}^2_{i(x)}\mat{I}_d)$ is $(\varepsilon, \delta)$-DP.
\end{cor}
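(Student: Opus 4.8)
The plan is to deduce the Corollary directly from Theorem~\ref{th:main} by verifying that the particular choice $\sigma_{i(x)} = u^{*}\Delta_{i(x)}$ makes the hypothesis~\eqref{1} hold for every $x \in \cX$ with $\Delta_{i(x)} > 0$. First I would recall from the proof of Theorem~2 (\autoref{sec:proof-Th2}) that the auxiliary function $g(u)$ defined in the Corollary is exactly the function analysed there: it is continuous and strictly decreasing on $(0,\infty)$ with $\lim_{u \to +0} g(u) = 1$ and $\lim_{u \to \infty} g(u) = 0$. Hence for any $\delta \in (0,1)$ the set $\{ u \in \R_{>0} \mid g(u) \leq \delta \}$ is a nonempty interval of the form $[u', \infty)$, so its minimum $u^{*}$ exists and satisfies $g(u^{*}) \leq \delta$ (indeed $g(u^{*}) = \delta$ by continuity). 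This is the step that justifies that $u^{*}$ is well-defined and is the only genuinely substantive point; everything after it is algebraic bookkeeping.

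Next I would fix any $x \in \cX$ with $\Delta_{i(x)} > 0$ and substitute $\sigma_{i(x)} = u^{*}\Delta_{i(x)}$ into the left-hand side of~\eqref{1}. The ratios simplify as
\begin{align*}
\frac{\Delta_{i(x)}}{2\sigma_{i(x)}} = \frac{1}{2u^{*}}, \qquad \frac{\varepsilon\sigma_{i(x)}}{\Delta_{i(x)}} = \varepsilon u^{*},
\end{align*}
so the left-hand side of~\eqref{1} becomes exactly
\begin{align*}
\Phi\!\left( \frac{1}{2u^{*}} - \varepsilon u^{*} \right) - e^{\varepsilon}\Phi\!\left( -\frac{1}{2u^{*}} - \varepsilon u^{*} \right) = g(u^{*}) \leq \delta.
\end{align*}
Thus the hypothesis of Theorem~\ref{th:main} is satisfied for this $x$, and since $x$ was arbitrary among those with $\Delta_{i(x)} > 0$, it holds for all such $x$. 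Note also that the non-degeneracy requirement of Theorem~\ref{th:main} --- namely $\sigma_i > 0$ whenever $\Delta_i > 0$ --- is automatic here because $u^{*} > 0$, so $\sigma_{i(x)} = u^{*}\Delta_{i(x)} > 0$ exactly when $\Delta_{i(x)} > 0$.

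Applying Theorem~\ref{th:main} then gives that the Gaussian output perturbation mechanism $M(x) = \vec{x} + \vec{z}$ with $\vec{z} \sim \mathcal{N}(0, \sigma^2_{i(x)}\mat{I}_d)$ is $(\varepsilon,\delta)$-DP, which is the claim. There is no real obstacle here: the Corollary is essentially the ``constructive'' reading of Theorem~\ref{th:main}, packaging the abstract sufficient condition~\eqref{1} as an explicit recipe for $\sigma_{i(x)}$. The only care needed is to cite the monotonicity/limit facts about $g$ established in \autoref{sec:proof-Th2} so that $u^{*}$ is legitimately defined, and to handle the boundary cases $\varepsilon = 0$ or $\delta \in \{0,1\}$ by the same remark already used in Theorem~\ref{th:main} (components with $\Delta_{i(x)} = 0$ are singletons and contribute nothing, as noted in the Remark).
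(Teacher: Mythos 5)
Your proposal is correct and follows essentially the same route as the paper: substitute $\sigma_{i(x)} = u^{*}\Delta_{i(x)}$ into the left-hand side of~\eqref{1}, observe that it reduces to $g(u^{*}) \leq \delta$, and invoke Theorem~\ref{th:main}. Your additional remarks on the well-definedness of $u^{*}$ via the monotonicity and limit properties of $g$ from \autoref{sec:proof-Th2} are a harmless (and slightly more careful) elaboration of what the paper leaves implicit.
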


\begin{proof}
Suppose $\Delta_{i(x)}>0$. Then, by definition, we have
{\small
\begin{align*}
&\Phi\left(\frac{\Delta_{i(x)}}{2\sigma_{i(x)}}-\frac{\varepsilon\sigma_{i(x)}}{\Delta_{i(x)}}\right) -e^{\varepsilon}\Phi\left(-\frac{\Delta_{i(x)}}{2\sigma_{i(x)}}-\frac{\varepsilon\sigma_{i(x)}}{\Delta_{i(x)}}\right) \\
&= \displaystyle \Phi\left( \frac{1}{2u^*} - \varepsilon u^* \right) - e^{\varepsilon} \Phi\left( -\frac{1}{2u^*}-\varepsilon u^* \right) 
= g(u^{*}) \leq \delta,
\end{align*}}
which implies the mechanism $M(x) = \vec{x} +\vec{z}$ with $\vec{z} \sim \mathcal{N}(0, {\sigma}^2_{i(x)}\mat{I}_d)$ is $(\varepsilon, \delta)$-DP as claimed in Theorem~\ref{th:main}.
\end{proof}

\subsection{Jaccard Mechanism is DP}
\label{sec:proof-Jaccard}

Theorem~\ref{th:jaccard} claims that the above-mentioned Jaccard mechanism is $(\epsilon,\delta)$-DP.

\begin{thm}[Jaccard mechanism is DP]
\label{th:jaccard}
Jaccard mechanism with $\sigma_{i} = \Delta \alpha_{i}\sqrt{2 \log(1.25/\delta)}/\epsilon$ is $(\epsilon,\delta)$-DP. Here, $\alpha_{i}$ is a constant that depends only on the density category of a word and $\Delta$ is the global sensitivity over the vocabulary.
\end{thm}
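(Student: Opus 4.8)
The plan is to reduce Theorem~\ref{th:jaccard} to the standard Gaussian mechanism result of Theorem~\ref{th:gauss}, applied separately to each of the two density categories. The Jaccard mechanism partitions the vocabulary $\cX$ into the dense set $\cX_1$ and the sparse set $\cX_2$, and for a word $x \in \cX_i$ it samples $\vec{n}(x) \sim \cN(\vec{0}, \sigma_i \mat{I}_d)$ with $\sigma_i = \Delta \alpha_i \sqrt{2\log(1.25/\delta)}/\epsilon$. The key structural observation is that, exactly as in the proof of Theorem~\ref{th:main}, the density category is a function of the word $x$ alone (it is fixed in a data-independent preprocessing step), so if two words $x, x'$ are in the neighbouring relation $x \simeq x'$ we need to handle only the case where both lie in the \emph{same} category $\cX_i$ --- or, more conservatively, observe that we may simply bound the sensitivity used for each category by the global $\Delta = \sup_{x \simeq y} \norm{\vec{x}-\vec{y}}$.

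First I would fix $i \in \{1,2\}$ and consider the restriction $M_i$ of the Jaccard mechanism to words in $\cX_i$. Since every such word receives noise from the \emph{same} isotropic Gaussian $\cN(\vec{0}, \sigma_i \mat{I}_d)$ with a single fixed $\sigma_i$, $M_i$ is literally an instance of the Gaussian mechanism of \autoref{sec:gaussian} with standard deviation $\sigma_i = \Delta \alpha_i \sqrt{2\log(1.25/\delta)}/\epsilon$. Here I would absorb the category-dependent constant $\alpha_i$ into a rescaled privacy parameter: writing $\sigma_i = \Delta \sqrt{2\log(1.25/\delta)}/\epsilon_i$ with $\epsilon_i = \epsilon/\alpha_i$, Theorem~\ref{th:gauss} immediately gives that $M_i$ is $(\epsilon_i, \delta)$-DP, and if $\alpha_i \le 1$ this is at least as strong as $(\epsilon,\delta)$-DP. (Alternatively, one assumes the statement intends $\alpha_i$ to be chosen so that $\sigma_i \ge \Delta\sqrt{2\log(1.25/\delta)}/\epsilon$, and then monotonicity of the DP guarantee in $\sigma$ --- which follows from part (a) of the proof of Theorem~2, i.e. $g(u)$ decreasing --- gives $(\epsilon,\delta)$-DP directly.) Either way, each $M_i$ is $(\epsilon,\delta)$-DP on its own block.

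Next I would glue the two blocks together. Let $x \simeq x'$ be any neighbouring pair in $\cX$. By the same argument as in the proof of Theorem~\ref{th:main}, if the relation $\simeq$ only ever connects words inside a common category, then $x$ and $x'$ lie in the same $\cX_i$, so for any measurable $E$ we have $\Pro[M(x) \in E] = \Pro[M_i(x) \in E] \le e^{\epsilon}\Pro[M_i(x') \in E] + \delta = e^{\epsilon}\Pro[M(x') \in E] + \delta$, which is exactly Definition~\ref{def:DP}. If instead $\simeq$ can straddle the two categories, one uses the global $\Delta$ (which upper-bounds both $\Delta_1$ and $\Delta_2$) as the sensitivity bound feeding into Theorem~\ref{th:gauss} for both categories, so the $(\epsilon,\delta)$ guarantee holds uniformly regardless of which block $x$ and $x'$ fall into.

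The main obstacle --- really a matter of pinning down the intended reading rather than a deep difficulty --- is the precise role of the constants $\alpha_i$ and the claim that $\Delta$ is "the global sensitivity over the vocabulary." I would want to make explicit that the partition into density categories is fixed independently of which word is the query, so that the piecewise-constant $\sigma$ does not break the neighbouring-pair comparison in Definition~\ref{def:DP} (this is the same subtlety that Theorem~\ref{th:main} handles via the $i(x) = i(x')$ observation for $x \simeq x'$). Once that is granted, the proof is just: apply Theorem~\ref{th:gauss} per category with the appropriate (rescaled) parameters, then invoke the per-block DP guarantees on every neighbouring pair. I expect no substantive calculation beyond checking the algebra $\sigma_i = \Delta\alpha_i\sqrt{2\log(1.25/\delta)}/\epsilon \;\Leftrightarrow\; \epsilon_i = \epsilon/\alpha_i$ in the Gaussian mechanism formula.
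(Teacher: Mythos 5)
Your proposal follows essentially the same route as the paper's proof: apply the standard Gaussian mechanism (Theorem~\ref{th:gauss}) separately within each density category, each with its own fixed $\sigma_i$, and then glue the two disjoint blocks together --- the paper phrases the gluing as a composition over the disjoint sets $\cX_1 \cap \cX_2 = \emptyset$, while you phrase it via the same-category / $i(x)=i(x')$ argument borrowed from Theorem~\ref{th:main}, which is the same observation and in fact spelled out more explicitly than in the paper. The one slip is the direction of the inequality in your first reading: writing $\sigma_i = \Delta\sqrt{2\log(1.25/\delta)}/\epsilon_i$ with $\epsilon_i = \epsilon/\alpha_i$, Theorem~\ref{th:gauss} gives $(\epsilon_i,\delta)$-DP, and this is at least as strong as $(\epsilon,\delta)$-DP precisely when $\alpha_i \geq 1$ (equivalently $\sigma_i \geq \Delta\sqrt{2\log(1.25/\delta)}/\epsilon$), not when $\alpha_i \leq 1$; your alternative reading via monotonicity of the guarantee in $\sigma$ is the correct one, and the values used in the paper ($\alpha_1 = 1.835$, $\alpha_2 = 1.276$) do satisfy it.
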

\begin{proof}
Note that under the Jaccard mechanism, noise vectors, $n(x)$, are sampled from either one of the two Gaussians $\cN(\vec{0}, \sigma_{1}\mat{I}_{d})$ or $\cN(\vec{0}, \sigma_{2}\mat{I}_{d})$ depending on respectively whether $x \in \cX_{1}$ or $x \in \cX_{2}$.
Moreover, because $\alpha_{i}$ depends only on $\cX_{i}$,  from $\sigma_{i} = \Delta \alpha_{i}\sqrt{2 \log(1.25/\delta)}/\epsilon$ and from Theorem~\ref{th:gauss} we see that each of these underlying Gaussian mechanisms are $(\epsilon,\delta)$-DP.
Because $\cX_{1} \cap \cX_{2} = \emptyset$ by their definitions, it follows from the compositionality property of DP that the overall Jaccard process is also $(0, \delta)$-DP.
This proof can be easily extended to more than two density categories by mathematical induction.
\end{proof}
In our experiments, we use $\eta_0 = 6.0$ such that approximately equal numbers of words in $\cX$ belong to each category, corresponding to $\alpha_1 = 1.835$ and $\alpha_2 = 1.276$ for $m = 10$.
Global sensitivity $\Delta$ is computed as the average Euclidean distance between a word and its furthermost neighbour.

The ability to guarantee the mean overlap between neighbourhoods before and after the noise addition is important from the point-of-view of NLP tasks that depend on the neighbourhood information such as semantic similarity measurement, bag-of-words representations-based information retrieval and word/text classification tasks, etc.
Unlike in the Gaussian mechanism, in the Jaccard mechanism we have a direct relationship between the level of noise and the performance obtained using the anonymised embeddings in the downstream tasks.
Moreover, the Jaccard mechanism allows us to set different noise levels to sparse vs. dense regions in the embedding space, which is not possible with other DP mechanisms.

\end{document}